\PassOptionsToPackage{twoside=false}{geometry}
\documentclass[manuscript, screen]{acmart}
\acmJournal{TKDD}

\usepackage{microtype}
\usepackage{graphicx}
\usepackage{subfigure}
\usepackage{booktabs}
\usepackage{amsmath,amsfonts}

\usepackage{amssymb}
\usepackage{mathtools}
\usepackage{amsthm}
\usepackage{multirow}
\usepackage{tabularx}
\theoremstyle{plain}
\usepackage{bm}

\newtheorem{theorem}{Theorem}[section]
\newtheorem{proposition}[theorem]{Proposition}

\theoremstyle{definition}

\newtheorem{assumption}[theorem]{Assumption}
\theoremstyle{remark}
\newtheorem{remark}[theorem]{Remark}
\AtBeginDocument{%
  }

\begin{document}
\renewcommand\footnotetextcopyrightpermission[1]{} 
\settopmatter{printacmref=false} 
\title{CGRL: Causal-Guided Representation Learning for Node-Level Out-of-Distribution Generalization}


\author{Bowen Lu}
\affiliation{%
  \institution{Anhui University}
  \city{Hefei}
  \country{China}}
\email{wa24201032@stu.ahu.edu.cn}

\author{Lianqiang Yang}
\authornotemark[1]
\affiliation{%
  \institution{Anhui University}
  \city{Hefei}
  \country{China}}
\email{yanglq@ahu.edu.cn}
\author{Teng Li}
\affiliation{%
 \institution{Anhui University}
 \city{Hefei}
 \country{China}}
\email{tenglwy@gmail.com}
\author{Kun Zhang}
\affiliation{%
  \institution{Carnegie Mellon University}
  \city{Pittsburgh}
  \state{Commonwealth of Pennsylvania}
  \country{USA}}
\email{kunz1@cmu.edu}

\renewcommand{\shortauthors}{Trovato et al.}

\begin{abstract}
Graph Neural Networks (GNNs) have achieved remarkable success in graph-related tasks, yet their performance often degrades substantially in out-of-distribution (OOD) settings. Under distribution shifts, GNNs tend to fit environmental noise and spurious correlations rather than stable causal mechanisms, resulting in poor OOD robustness and unstable predictive representations. Although recent methods attempt to address this issue through environment invariance or structural causal reasoning, they remain insufficient for node classification because they do not explicitly capture the fine-grained latent geometry required by the task. We further identify a training instability phenomenon, termed \emph{Info-Jitter}, in which the mutual information between predictive representations and ground-truth labels fluctuates during training. To address these problems, we formulate a node-classification-specific causal graph from the geometric objective of node classification. Based on this formulation, we use $do$-calculus to block non-causal paths induced by environmental noise, derive a deconfounded interventional objective, and further develop a variational lower bound to disentangle the predictive representations into intra-class and inter-class component. Specifically, we propose a novel Causal-Guided Representation Learning (CGRL) framework. It consists of two components: a causal representation learning module based on multi-branch re-weighted representation learning (RRL), which learns a causal modulation matrix to emphasize causally relevant signals and suppress interference of environmental noise during message passing; and an optimization strategy that integrates intra-class aggregation, inter-class separation, energy-based reconstruction, and supervised prediction to regularize the latent space toward robust node-level generalization. Experiments on diverse benchmark datasets show that CGRL outperforms strong baselines across multiple types of distribution shifts and effectively alleviates the \emph{Info-Jitter} phenomenon.
\end{abstract}

\begin{CCSXML}
	<ccs2012>
	<concept>
	<concept_id>10002950.10003624.10003633.10010917</concept_id>
	<concept_desc>Mathematics of computing~Graph algorithms</concept_desc>
	<concept_significance>500</concept_significance>
	</concept>
	<concept>
	<concept_id>10010147.10010257.10010293.10010294</concept_id>
	<concept_desc>Computing methodologies~Neural networks</concept_desc>
	<concept_significance>500</concept_significance>
	</concept>
	</ccs2012>
\end{CCSXML}

\ccsdesc[500]{Mathematics of computing~Graph algorithms}
\ccsdesc[500]{Computing methodologies~Neural networks}

\keywords{Graph Neural Networks, Causal Representation Learning, Node Classification, Out-of-Distribution Generalization}


\maketitle

\section{Introduction}

Graph-structured data \cite{NEURIPS2020_fb60d411} are ubiquitous in real-world applications \cite{Abbahaddou2024GraphNN, qiao2025gcal}, including social networks \cite{kuhne2025optimizing}, recommendation systems \cite{10.1145/3811912}, and biological interaction networks \cite{10.1145/3711896.3737144}. To model such non-Euclidean data, Graph Neural Networks (GNNs) have become a dominant paradigm, as they effectively integrate graph topology with node attributes for a wide range of downstream tasks, such as node classification \cite{10.1145/3543507.3583335, jeong2024igraphmix}, link prediction \cite{kuhne2025optimizing}, and graph classification \cite{10.1145/3638781}. Despite their strong in-distribution (ID) performance \cite{NEURIPS2024_91813e5d, luo2025node}, however, GNNs often suffer substantial degradation under out-of-distribution (OOD) settings \cite{wu2022discovering}, where the training and test graphs differ in feature distributions, structural patterns, or latent environments. This challenge is particularly severe for node classification, because each node is embedded in a densely interconnected graph and its representation is continuously shaped by neighborhood aggregation, graph topology, and environment-dependent context. Therefore, graph distribution shifts do not merely perturb the input data, but can also alter the representation formation process itself. Under such shifts, GNNs tend to fit environmental noise and spurious correlations rather than stable causal mechanisms, which degrades OOD robustness and destabilizes predictive representations.

A natural line of research addresses this challenge through causal representation learning \cite{10.1145/3644392, COHF}. Existing graph OOD methods can be broadly grouped into two categories. The first seeks to learn environment-invariant representations under the assumption that the underlying causal mechanisms remain stable across environments, whereas spurious correlations vary with environmental changes \cite{10.1145/3644392, wu2022handling}. By enforcing representation consistency across multiple environments, these methods aim to identify stable predictive factors and improve OOD generalization \cite{NEURIPS2022_9b77f073, chen2023does}. The second category approaches graph OOD generalization through Structural Causal Models (SCMs) \cite{primer}, which explicitly model causal relations among graph structure, latent variables, and labels, and then use intervention tools such as backdoor adjustment to mitigate confounding bias \cite{NEURIPS2022_8b21a7ea, CIA-LRA}. Although both paradigms have shown promise, they remain insufficient for node classification. Environment-invariant methods typically require explicit environment partitions, which are often unavailable in practical node-level scenarios and may themselves introduce artificial shifts \cite{creager21environment, FLOOD}. More importantly, imposing a global invariance constraint may suppress not only harmful environmental noise but also informative local topology and class-relevant neighborhood structure that are essential for distinguishing nearby nodes \cite{NEURIPS2022_9b77f073}. SCM-based methods, in contrast, provide a principled deconfounding framework, yet most of them operate at a relatively macroscopic level: they focus on blocking confounding paths without explicitly characterizing the latent geometric objective intrinsic to node classification, namely that intra-class nodes should become compact while inter-class nodes should remain separated in the representation space \cite{10.1145/3707645}. Under distribution shifts, however, intra-class nodes is often contaminated by heterophilic neighbors and environment-dependent perturbations. As a result, predictive representations contain both stable intra-class and interfering inter-class component. Without explicit latent disentanglement, existing causal methods cannot separately model these two parts, leaving predictive representations vulnerable to OOD scenarios.

\begin{figure}[t]
	\begin{center}
		\centerline{\includegraphics[width=1\columnwidth]{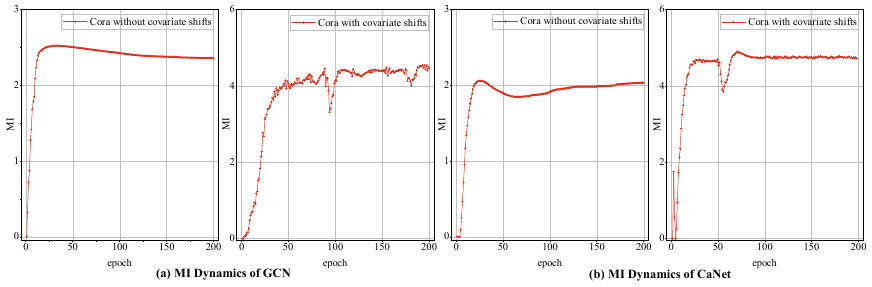}}
		\caption{Illustration of the \emph{Info-Jitter} phenomenon under covariate shifts. We track the mutual information (MI) between predictive representations and ground-truth labels across training epochs on the Cora dataset. Covariate shifts are induced by degree-based partitioning, where high-degree nodes are used for training and low-degree nodes for OOD data. Under identical distributions (left panel of (a) and left panel of (b)), both the conventional GCN and the causal baseline CaNet exhibit smooth and stable MI convergence after an initial rise. In contrast, under covariate shifts (right panel of (a) and right panel of (b)), both models display pronounced MI oscillations throughout training, indicating unstable representation-label alignment and revealing their vulnerability to environment-dependent spurious correlations.}
		\label{figure1}
		\vspace{-1cm}
	\end{center}
\end{figure}

To make this limitation explicit, we identify a phenomenon that we term \emph{Info-Jitter}: the mutual information (MI) between predictive representations and ground-truth labels becomes unstable during training in OOD settings. Figure \ref{figure1} provides an illustrative example on the Cora dataset \cite{synthetic}, where covariate shifts are induced by degree-based partitioning. In the absence of distribution shifts, both a Graph Convolutional Network (GCN) \cite{kipf2017semisupervised} and the causal baseline CaNet \cite{CaNet} exhibit smooth MI evolution and eventually reach stable convergence. Once the distribution shifts, however, this stability is lost and the MI trajectories become markedly oscillatory. Although Figure \ref{figure1} only presents one representative case, this phenomenon is not restricted to a specific backbone or a single shift type \cite{heyunfei}. This instability is not merely a numerical artifact; rather, it reveals that the learned representation--label relationship itself becomes fragile. Because these methods do not explicitly disentangle intra-class representation from inter-class  interference, the learned predictive representations may still absorb environment-dependent patterns that are predictive on the training graph but unreliable on unseen data. When such non-causal correlations are disrupted by distribution shifts, the latent space itself becomes unstable, and the resulting decision boundary deteriorates accordingly. Therefore, robust node-level OOD generalization requires more than coarse-grained deconfounding from environmental noise. It also calls for a finer-grained disentanglement formulation, so that representation learning itself remains stable under distribution shifts \cite{xu2024causal, ICLR2024_6e2a1a8a, Zhang2024DebiasingGR}.

Motivated by this observation, we revisit node classification from a geometric and causal perspective. At its core, node classification is not merely a prediction problem, but also a structural one. This suggests that robust node-level prediction requires the learning of stable intra-class structure while suppressing inter-class interference induced by environmental confounding. From this viewpoint, distribution shifts do not only perturb the input graph, but also distort the latent geometry by introducing spurious overlap between classes. Based on this formulation, we construct a causal graph tailored to node classification and use \emph{do}-calculus to identify a deconfounded interventional objective. We further derive a variational lower bound that provides a principled objective for node-level causal disentanglement by separating intra-class causal structure from inter-class interference. To instantiate these theoretical insights, we propose a novel \textbf{Causal-Guided Representation Learning (CGRL)} framework. It consists of two tightly coupled components. The first is a causal representation learning module built upon multi-branch re-weighted representation learning (RRL), which learns a causal modulation matrix to adaptively emphasize causally relevant signals in the predictive representation and suppress environment-sensitive interference during message passing. The second is an optimization strategy consisting of four complementary objectives: intra-class aggregation, inter-class separation, energy-based reconstruction, and supervised prediction. Intra-class aggregation promotes compact class-consistent structure, while inter-class separation suppresses spurious overlap across different classes. The energy-based reconstruction objective further regularizes the latent graph structure in a deconfounded manner, and the supervised prediction objective ensures that the learned representation remains aligned with the node classification task. Through this joint optimization, CGRL not only filters out environmental interference, but also reshapes the latent space toward the fundamental geometric requirement of node classification. Extensive experiments on multiple benchmark datasets demonstrate that CGRL consistently outperforms state-of-the-art methods under graph OOD settings and substantially alleviates the \emph{Info-Jitter} phenomenon.

Our main contributions are summarized as follows:
\begin{itemize}
	\item We identify a training instability phenomenon under distribution shifts, termed \emph{Info-Jitter}, and relate it to the entanglement between intra-class causal structure and inter-class spurious interference.
	
	\item We formulate a node-classification-specific causal graph and derive a variational lower bound that provides a principled basis for disentangling causal representation.
	
	\item We propose \textbf{CGRL}, a causal-guided representation learning framework that integrates multi-branch causal modulation, geometric surrogate optimization, energy-based reconstruction, and supervised prediction for robust node-level OOD generalization.
	
	\item We conduct extensive experiments on diverse graph benchmarks, showing that CGRL consistently improves OOD generalization and alleviates the \emph{Info-Jitter} phenomenon.
\end{itemize}

\section{RELATED WORK}

\subsection{Graph Neural Networks}
Graph Neural Networks (GNNs) are primarily designed to handle graph-structured data with non-Euclidean structures \cite{graphsage, DBLP:conf/iclr/XuHLJ19, lee2023guiding}. In the task of node classification, they update the representation of each node by aggregating topological information and feature attributes among its neighbors, attempting to cluster intra-class nodes together and separate inter-class nodes, so as to achieve effective node discrimination. Typical GNNs include Graph Convolutional Network (GCN) \cite{kipf2017semisupervised} and Graph Attention Network (GAT) \cite{veličković2018graph}. The former aggregates node information in the spectral domain, while the latter incorporates attention mechanisms over nodes in the spatial domain. However, neither of them can stably learn the mutual information between predictive representations and ground-truth labels.

\subsection{Distribution Shifts and Causal Representation Learning}

Out-of-distribution (OOD) generalization on graphs is fundamentally challenged by the discrepancy between training and test data \cite{NEURIPS2021_1c336b80}. Existing studies characterize such discrepancies from multiple perspectives \cite{gui2022good, wu2022handling}. From the viewpoint of graph components, shifts may arise in node features, graph structures, environments, labels, or other latent factors \cite{DBLP:conf/icml/YehudaiFMCM21, CaNet, NEURIPS2022_9b77f073}. Such taxonomies help reveal the concrete sources of OOD difficulty in node classification and motivate methods tailored to different graph scenarios. For example, Explore-to-Extrapolate Risk Minimization (EERM) \cite{wu2022handling} simulates multiple virtual environments from a single observed graph, thereby modeling environment-related variation and improving extrapolation under graph shifts. From the perspective of the joint distribution over inputs and outputs, distribution shifts are also commonly categorized into covariate shifts and concept shifts \cite{gui2022good, NEURIPS2021_1c336b80, CIA-LRA}. The Graph Out-of-distribution (GOOD) benchmark \cite{gui2022good}, for instance, provides a controlled evaluation protocol that explicitly constructs both covariate shifts and concept shifts settings on graphs, and has become a standard testbed for studying node-level OOD generalization. Although these categorizations emphasize different aspects of distribution shift, they consistently suggest that GNNs are prone to fitting unstable, environment-dependent spurious correlations from in-distribution (ID) data, which in turn degrade OOD performance and destabilize node representation learning \cite{NEURIPS2022_8b21a7ea, COHF}.

To address this issue, an important line of work studies causal graph representation learning, which integrates causal inference with graph representation learning to distinguish stable causal mechanisms from spurious correlations \cite{NEURIPS2023_8ed2293e, wu2024learning}. Existing approaches can be broadly grouped into two directions. The first relies on causal invariant learning, where multiple environments are leveraged to identify stable predictive factors. While effective in some settings, such methods typically require explicit environment annotations or environment partitioning \cite{chen2023does, creager21environment, NEURIPS2022_4d4e0ab9}, which are often unavailable or difficult to define in practical node-level tasks. The second direction approaches node-level graph OOD generalization through structural causal reasoning, using causal graphs and intervention principles to reduce confounding effects without relying on explicit environment labels. For instance, Causal Intervention for Network Data (CaNet) \cite{CaNet} attributes node-level OOD degradation to latent confounding bias and introduces a pseudo-environment estimator to approximate latent environments, thereby enabling causal intervention without ground-truth environment annotations. Despite encouraging progress, however, current methods still focus primarily on coarse-grained invariance or macroscopic deconfounding. As a result, they leave limited room for the fine-grained latent disentanglement required by node classification, where robust generalization depends not only on reducing environmental confounding, but also on explicitly preserving intra-class structure while suppressing inter-class interference.

\subsection{Causal Invariance for OOD Generalization}

Causal invariance provides a widely used principle for OOD generalization: if a model can identify the representation component governed by stable causal mechanisms, its prediction should remain robust when environment-dependent correlations change \cite{gao2024rethinking, DBLP:conf/eccv/AnZJMH24}. In graph learning, this principle has been explored mainly in graph-level settings, where the prediction target is associated with a relatively compact and semantically coherent substructure. For instance, Causality Inspired Invariant Graph LeArning (CIGA) \cite{chen2023does} characterizes distribution shifts using structural causal models and argues that graph OOD generalization can be achieved when the predictor focuses on invariant subgraphs that preserve the causal information of labels. Based on this view, it decomposes graph prediction into invariant subgraph identification and classification, and uses an information-theoretic objective to maximize invariant intra-class information. This formulation is natural for graph classification, where the causal signal can often be localized to a relatively self-contained subgraph \cite{yu2025causal}. Although some existing methods implicitly or explicitly treat the ego-graph as an invariant subgraph for node-level prediction \cite{10.1145/3604427}, such a formulation is often too coarse for realistic node classification, which is performed on a single input graph where nodes are tightly coupled through neighborhood interactions and message passing \cite{jeong2024igraphmix, luo2025node}. Therefore, node classification imposes a finer-grained geometric requirement on the latent space: robust prediction depends not only on preserving stable causal information, but also on maintaining intra-class compactness and inter-class separability. This requirement is not directly captured by graph-level invariant subgraph formulations, nor by coarse-grained environment partitioning or macroscopic SCM-based deconfounding alone.

Accordingly, although existing causal-invariance methods can mitigate confounding bias at a high level, they do not directly address the fine-grained latent structure required by node-level OOD prediction. Under distribution shifts, intra-class nodes may still be contaminated by heterophilic neighbors and environmental perturbations, causing intra-class structure and inter-class interference to become entangled in the learned representation. This limitation motivates the need for a finer-grained causal formulation for node classification, in which OOD robustness is achieved not only through deconfounding, but also through explicit latent disentanglement that promotes intra-class aggregation and suppresses spurious inter-class overlap.

\begin{table*}[htbp]
	\centering
	\caption{Notations and corresponding meanings}
	\label{tab:gnn_symbols}
	\begin{tabular}{cccc}
		\toprule
		\textbf{Notation} & \textbf{Meaning} & \textbf{Notation} & \textbf{Meaning} \\
		\midrule
		$\mathcal{G}$ & Input graph & $G$ & Causal graph \\
		$\mathcal{V}$ & Node set of graph $\mathcal{G}$ & $\phi$ & Encoder parameters \\
		$\mathcal{E}$ & Edge set of graph $\mathcal{G}$ & $v$ & Node of $\mathcal{V}$ \\
		$E$ & Environment of $\mathcal{G}$& $u$ & Neighbor nodes of $v$ \\
		$\mathbf{Y}$ & Ground-truth label& $\mathcal{G}_v$ & Ego-graph of $v$ \\
		$N$ & Number of nodes & $\theta$ & Classifier parameters \\
		$|\mathcal{E}|$ & Number of edges & $\mathbf{Z}$ & Representation of GNNs \\
		$\mathbf{A}$ & Adjacency matrix of graph $\mathcal{G}$ & $\mathbf{Z}_p$ & Output causal representation \\
		$\mathbf{Z}_a$ & Intra-class causal representation & c & Class of node $v$\\
		$\mathbf{Z}_r$ & Inter-class spurious representation & $h_\theta(\cdot)$ & Classifier with parameter $\theta$ \\
		$f$ & Encoder & $\mathcal{R}_{\mathcal{T}}(f)$ & Expected target risk\\
		$\hat{\mathcal{R}}_{\mathcal{S}}(f)$ & Empirical source risk & $d_{\mathcal{H}\Delta\mathcal{H}}(\cdot, \cdot)$ & Class-conditional causal discrepancy\\
		$\lambda$ & Ideal joint risk & $\mathcal{Y}$ & Set of all classes \\
		$K$ & Branches of model & $deg(v)$ & Degree of node $v$ \\
		$\Gamma$ & Causal modulation matrix & $\mathcal{D}$ & Euclidean distance \\
		$\mathbf{z}_{p, v}$ & Representation of node $v$ in $\mathbf{z}_p$ & $\mathbf{z}_{p, u}$ & Representation of $u$ in $\mathbf{z}_p$\\
		$\mathbf{z}_{a,i}$ & Representation of class $c$ in $\mathbf{Z}_a$ & $\mathbf{z}_{r,j}$ & Representatio of class $c$ in $\mathbf{Z}_r$ \\
		\bottomrule
	\end{tabular}
\end{table*}
\section{PRELIMINARIES}
\subsection{Notations}
Let $\mathcal{G}=(\mathcal{V}, \mathcal{E})$ denotes an input graph with node set $\mathcal{V}$ and edge set $\mathcal{E}$. The set $\mathcal{G}=\{\mathcal{G}_1,\mathcal{G}_2,...,\mathcal{G}_{v},...\}$ consists of multiple ego-graphs, where $\mathcal{G}_v$ denotes the graph belonging to the $v$-th ego-graph. Let $N=|\mathcal{V}|$ represent the number of nodes. $A\in \mathbb{R}^{N\times N}$ and the  $\mathbf{X} \in \mathbb{R}^{N\times d}$ denote adjacency matrix and node attributes, respectively, where $d$ is representation dimension. For a node $v \in \mathcal{V}_c$, $N_{(v)}=\{u|\langle v,u\rangle\in \mathcal{E}\}$ is defined as the set of its neighbors and $deg(v)$ is its degree, and c is class of node $v$. For an edge $\langle u,v \rangle\in \mathcal{E}$, we have $A[u,v]=1$; otherwise, $A[u,v]=0$. We use $\mathbf{Z}_p$, $\mathbf{Y}$, $\mathcal{Y}$ to represent the predictive representation, ground-truth label and set of all classes, respectively. The goal of node classification is to cluster intra-class nodes and separate inter-class nodes. Thus, we denote the intra-class representation and inter-class representation by $\mathbf{Z}_a$ and $\mathbf{Z}_r$, respectively. The notations and meanings used in this paper are shown in Table \ref{tab:gnn_symbols}.

\subsection{Problem Definition}
In this paper, we focus on node classification. Unlike graph-level prediction, node classification is governed by a finer-grained latent-space objective: the learned representation should preserve class-consistent structure while avoiding cross-class interference. This distinction makes the standard notion of graph-level causal invariance difficult to transfer directly. In graph classification, causal invariance is often instantiated as an environment-invariant subgraph, and many existing methods are designed to identify such graph-level stable components \cite{10.1145/3767162, 10.1145/3690624.3709203, 10.1145/3534678.3539366}. In node-level settings, however, nodes are connected through intricate graph dependencies and continuously exchange information through message passing. As a result, the fine-grained invariant component associated with an individual node is substantially harder to isolate. Motivated by this mismatch, we construct the causal graph shown in Figure \ref{image2a}. A key aspect of our formulation is that the predictive representation $\mathbf{Z}_p$ should not be viewed as a single homogeneous latent variable. Instead, from a node-level perspective, it contains two structurally different components: an intra-class representation $\mathbf{Z}_a$, which captures the stable class-consistent structure to be preserved across environments, and an inter-class representation $\mathbf{Z}_r$, which reflects cross-class interference induced by environment-dependent spurious correlations. Figure \ref{image2b} provides an intuitive illustration of these two components. In practice, message passing in GNNs often mixes useful causal signals with heterogeneous neighborhood information, so that intra-class structure can be contaminated by representations from different-class neighbors, creating spurious overlap in the latent space. Such overlap is typically induced by fitting environment-dependent correlations rather than genuine class-relevant mechanisms, and it directly undermines OOD generalization.

Accordingly, the central problem studied in this paper is how to suppress environmentally induced spurious correlations, recover stable intra-class structure, and disentangle $\mathbf{Z}_a$ and $\mathbf{Z}_r$ from the predictive representation $\mathbf{Z}_p$. To address this problem, we combine causal deconfounding with a geometry-aware formulation of node classification, so that the learned representation is aligned with both the causal mechanism and the intrinsic latent-space objective of the task under distribution shifts.

\begin{figure}[t]
	\centering
	\subfigure[Causal graph $G$]{
		\includegraphics[width=0.6\textwidth]{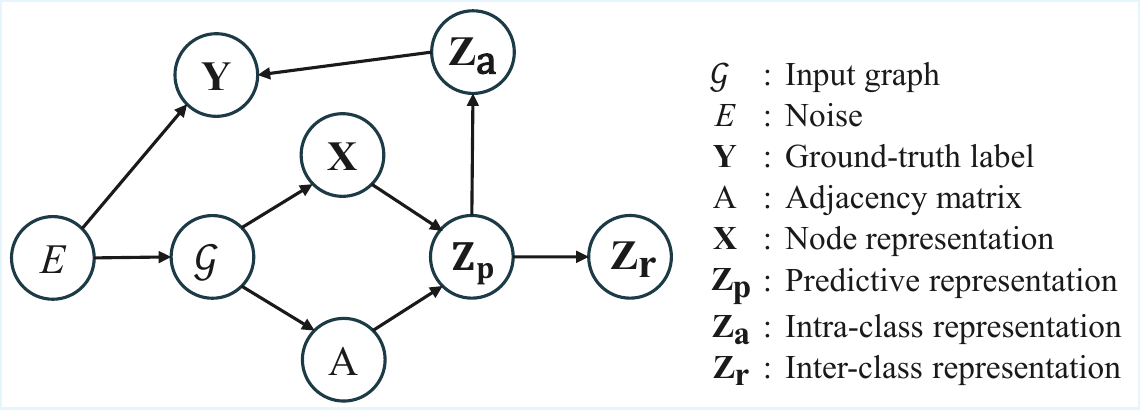}
		\label{image2a}
	}
	\subfigure[$\mathbf{Z}_a$ and $\mathbf{Z}_r$]{
		\includegraphics[width=0.3\textwidth]{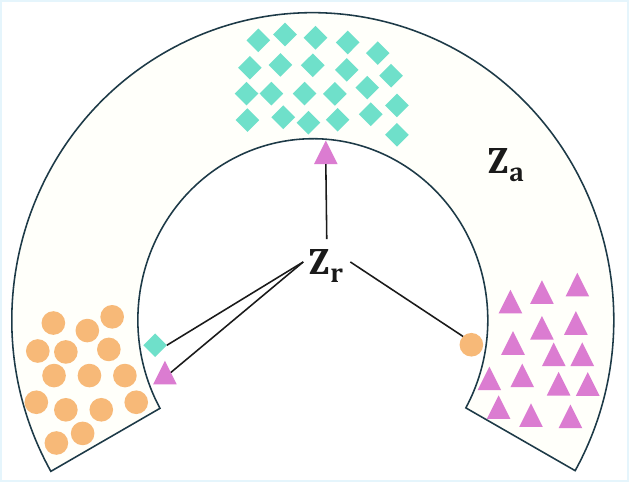}
		\label{image2b}
	}
	\caption{(a) A causal graph G consisting of 8 variables. (b) $\mathbf{Z}_a$ within the ring and $\mathbf{Z}_r$ outside the ring respectively represent intra-class and inter-class nodes, where nodes of different colors and types represent different categories.}
	\label{image2}
\end{figure}

\subsection{Energy-based Model}
Energy-based Model (EBM) \cite{ebm, DBLP:conf/iclr/GrathwohlWJD0S20, wu2023energybased} capture dependencies among variables by assigning scalar energies to them. In the OOD setting, the distribution discrepancy across graphs can be substantial. This calls for a robust method that enables the model to learn causal features. Instead of leveraging Variational Autoencoders (VAEs) \cite{DBLP:journals/corr/KingmaW13} and diffusion models \cite{NEURIPS2020_4c5bcfec}, we use EBM which exhibit strong compatibility with graph data and are capable of capturing causal structural dependencies among nodes. It define an energy function $E(X_1,X_2)$ between two variables $X_1$ and $X_2$. Then, its probability distribution is represented as follows:
\begin{align}
	P(X_1,X_2)=\frac{e^{- E(X_1,X_2)}}{\sum_x e^{- E(X_1,X_2)}}.
\end{align}
The energy score $E(X_1,X_2)$ can model the dependencies between $X_1$ and $X_2$, which can  serve as a key method for learning stable mutual information between prediction representations and ground-truth labels. However, current research on energy-based modeling and its applications in causal representation learning remain underexplored, and the potential of energy-based models in the stable modeling of interdependent data has not yet been fully exploited.

\section{METHODOLOGY}

In this section, we first analyze the reason why GNNs fail to stably learn the mutual information between predictive representations and ground-truth labels and GNNs struggle to generalize to unseen data in Section \ref{section4.1}. Then, Based on the this analysis, we derive a theoretical lower bound to guide model optimization. Finally, leveraging these theoretical foundations, we introduce a novel Causal-Guided Representation Learning (CGRL) framework tailored to address OOD problems on graphs.
\begin{figure*}[t]   
	
	\centering
	
	\includegraphics[width=1.0\textwidth]{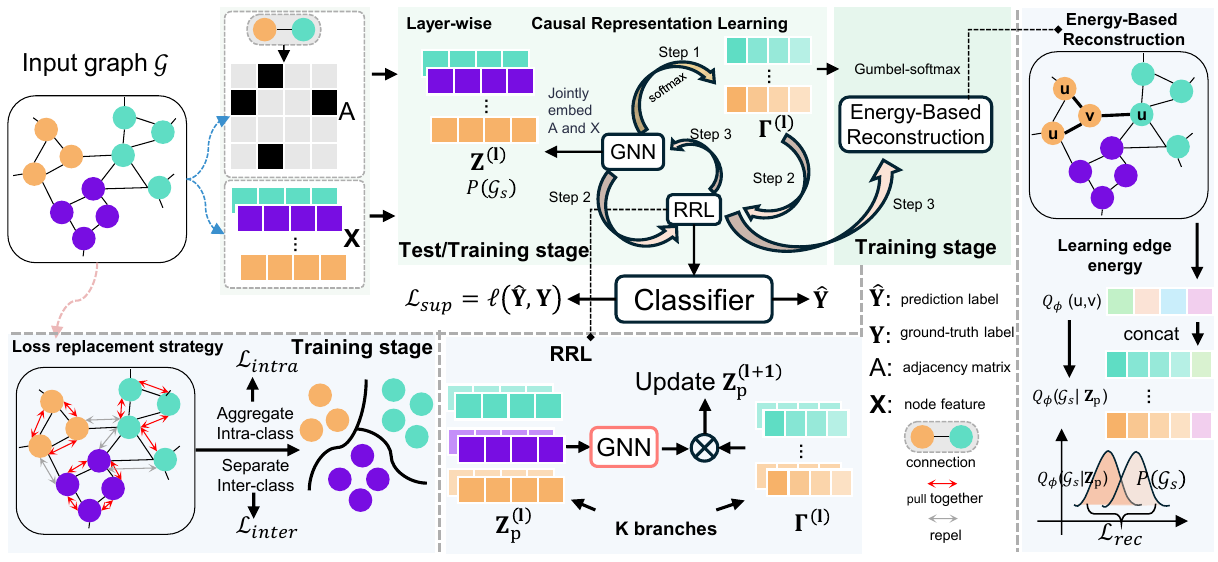}
	
	
	\caption{CGRL consists of two components: \emph{causal representation learning} and \emph{optimization strategy}. The former is implemented by a multi-branch re-weighted representation learning (RRL) module, which learns a causal modulation matrix $\Gamma$ to adaptively reweight the predictive representation $\mathbf{Z}_p$. The latter contains four objectives: intra-class aggregation, inter-class separation, energy-based reconstruction, and prediction loss. Given the adjacency matrix $A$ and node feature $\mathbf{X}$, a GNN encoder first produces the latent representation $\mathbf{Z}$, which parameterizes the graph prior $P(\mathcal{G}_v)$. During training, $\mathbf{Z}$ is further processed by the Gumbel-softmax, the RRL module, and the energy-based reconstruction module to obtain $\mathbf{Z}_p$, which is then fed into a classifier for prediction and supervision via $\mathcal{L}_{sup}$. In parallel, two geometric surrogate objectives are imposed on the latent space, yielding the intra-class loss $\mathcal{L}_{intra}$ and inter-class loss $\mathcal{L}_{inter}$. Together with the reconstruction loss $\mathcal{L}_{rec}$, these components jointly encourage deconfounded representation learning and robust OOD generalization.}
	
	\label{Figure2}
	
\end{figure*}

\subsection{A Causal Perspective for OOD Generalization}
\label{section4.1}
In this section, we analyze the \emph{Info-Jitter} phenomenon and the poor generalization of GNNs from a causal perspective \cite{causality}. As illustrated in Figure \ref{image2a}, we formulate a causal graph based on the SCM \cite{primer}, where directed arrows between two variables denote causal relationships. This graph comprises eight elements: environmental noise $E$, input graph $\mathcal{G}$, adjacency matrix $A$, node feature $\mathbf{X}$, predictive representation $\mathbf{Z}_p$, intra-class representation $\mathbf{Z}_a$, and inter-class representation $\mathbf{Z}_r$. The underlying causal mechanisms are detailed as follows:

\begin{itemize}

\item $\mathbf{Y}\leftarrow E\rightarrow \mathcal{G}$: Environmental noise $E$ induces distribution shifts in both the input graph $\mathcal{G}$ and the ground-truth label $\mathbf{Y}$.

\item $A\leftarrow \mathcal{G}\rightarrow \mathbf{X}$: The input graph $\mathcal{G}$ comprises the adjacency matrix $A$ (topological structure) and the node feature matrix $\mathbf{X}$ (attribute information).

\item $\mathbf{Z}_a \leftarrow \mathbf{Z}_p \rightarrow \mathbf{Z}_r$: Node classification requires GNNs to aggregate intra-class nodes and separate inter-class nodes. Consequently, the predictive representation $\mathbf{Z}_p$ governs both the intra-class representation $\mathbf{Z}_a$ and the inter-class representation $\mathbf{Z}_r$.

\item $\mathbf{Z}_a \rightarrow \mathbf{Y}$: The intra-class representation $\mathbf{Z}_a$ dictates the ground-truth label $\mathbf{Y}$, as accurate classification relies on the tight aggregating of intra-class nodes within the latent space.

\end{itemize}

Traditional statistical methods evaluate the influence of $\mathbf{Z}_p$ on $\mathbf{Y}$ by calculating the observational conditional probability $P(\mathbf{Y}|\mathbf{Z}_p)$ \cite{10.1145/3534678.3539366}. However, this approach introduces confounding bias, which triggers the \emph{Info-Jitter} phenomenon and degrades the generalization capability of GNNs. The causal graph reveals three backdoor paths: $\mathbf{Z}_r \leftarrow \mathbf{Z}_p \rightarrow \mathbf{Z}_a \rightarrow \mathbf{Y}$, $\mathbf{Z}_p \leftarrow A \leftarrow \mathcal{G} \leftarrow E \rightarrow \mathbf{Y}$, and $\mathbf{Z}_p \leftarrow \mathbf{X} \leftarrow \mathcal{G} \leftarrow E \rightarrow \mathbf{Y}$. These paths originate from fork structures (i.e., a common cause influencing multiple descendants), which open channels for non-causal information flow. Based on the SCM, the estimation of $P(\mathbf{Y}|\mathbf{Z}_p)$ suffers from confounding bias; specifically, two central fork variables induce these spurious correlations \cite{Pearl2014Interpretation}. First, the structure $\mathbf{Z}_r \leftarrow \mathbf{Z}_p \rightarrow \mathbf{Z}_a$ establishes $\mathbf{Z}_p$ as a confounder that induces non-causal flow between the intra-class representation $\mathbf{Z}_a$ and the inter-class representation $\mathbf{Z}_r$. In the context of node classification, $\mathbf{Z}_a$ and $\mathbf{Z}_r$ should inherently exhibit distinct aggregating and separation properties, respectively. This fork structure entangles the two distributions, blurring their boundaries and compromising the discriminative power of the node representations. Second, the environmental noise $E$ acts as an unobserved confounder, injecting noise into the learning process of $\mathbf{Z}_p$ via the input graph $\mathcal{G}$. Driven by these two fork structures, the optimization of GNNs degenerates from capturing invariant causal mechanisms to exploiting spurious correlations. From a causal perspective, this structural vulnerability fundamentally explains both target phenomena. During the training phase, the non-causal associations induced by environmental noise $E$ and the entangled latent distributions ($\mathbf{Z}_a$ and $\mathbf{Z}_r$) are statistically unstable across batches. As the model adjusts its weights to fit these shifting spurious features, the mutual information between the predictive representation $\mathbf{Z}_p$ and the label $\mathbf{Y}$ fluctuates continuously, structurally manifesting as the \emph{Info-Jitter} phenomenon. Furthermore, when deployed in OOD scenarios, the environment-specific correlations memorized via these backdoor paths inevitably break. Lacking causal intra-class representation to anchor its predictions, the model suffers a collapse in its decision boundary, leading to severe generalization degradation \cite{10.1145/3644392}.

To evaluate the true causal effect of $\mathbf{Z}_p$ on $\mathbf{Y}$, we resort to $do$-calculus and employ backdoor adjustment to actively intervene on $\mathbf{Z}_p$, denoted as $P(\mathbf{Y}|do(\mathbf{Z}_p))$. While Randomized Controlled Trials (RCTs) are the gold standard for implementing such interventions, the presence of the unobserved variable $E$ renders physical interventions infeasible. To overcome this limitation, the following theorem demonstrates that $P(\mathbf{Y}|do(\mathbf{Z}_p))$ can be estimated using solely observational data, bypassing the requirement to observe $E$.
\begin{theorem}
\label{theorem3.1}
Given a causal graph in Figure \ref{Figure2}, we can obtain a equation that estimate causal relationships from $\mathbf{Z}_p$ to $\mathbf{Y}$:
\begin{equation}\label{eq_backdoor}
    P_\theta(\mathbf{Y}|do(\mathbf{Z}_p))=\mathbb{E}_{P(\mathcal{G}_v)}\big[ P_\theta(\mathbf{Y}|\mathbf{Z}_p, \mathcal{G}_v)\big],
\end{equation}
where $\theta$ represents the parameter of classifier.
\end{theorem}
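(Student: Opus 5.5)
The plan is to obtain Eq.~\eqref{eq_backdoor} from the standard backdoor adjustment formula \cite{primer,causality}, with the ego-graph $\mathcal{G}_v$ (equivalently the pair $(A,\mathbf{X})$ restricted to the neighbourhood of $v$) as the adjustment set. First I will read off from the causal graph $G$ of Figure~\ref{image2a} that the two non-causal paths running through the unobserved environment, $\mathbf{Z}_p \leftarrow A \leftarrow \mathcal{G} \leftarrow E \rightarrow \mathbf{Y}$ and $\mathbf{Z}_p \leftarrow \mathbf{X} \leftarrow \mathcal{G} \leftarrow E \rightarrow \mathbf{Y}$, both pass through the input graph. At node level the relevant instance of $\mathcal{G}$ is $\mathcal{G}_v$. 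Because $E$ is unobserved, it cannot be used for adjustment; $\mathcal{G}_v$ is the natural observable surrogate.

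Second, I will verify the two backdoor conditions for the set $\{\mathcal{G}_v\}$ relative to the ordered pair $(\mathbf{Z}_p,\mathbf{Y})$.
\begin{itemize}
\item \emph{No descendant of $\mathbf{Z}_p$.} $\mathcal{G}_v$ lies upstream of $\mathbf{Z}_p$ (its descendants are $A$, $\mathbf{X}$, $\mathbf{Z}_p$, $\mathbf{Z}_a$, $\mathbf{Z}_r$, $\mathbf{Y}$), so it is not a descendant of $\mathbf{Z}_p$.
\item \emph{Blocking the $E$-paths.} Each of the two $E$-paths contains the chain segment $A\leftarrow\mathcal{G}\leftarrow E$ or $\mathbf{X}\leftarrow\mathcal{G}\leftarrow E$ with $\mathcal{G}$ as a non-collider. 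Conditioning on $\mathcal{G}_v$ therefore d-separates them.
\item \emph{The path through $\mathbf{Z}_r$.} The remaining listed path, $\mathbf{Z}_r\leftarrow\mathbf{Z}_p\rightarrow\mathbf{Z}_a\rightarrow\mathbf{Y}$, does not enter $\mathbf{Z}_p$ through an arrowhead. Relative to $(\mathbf{Z}_p,\mathbf{Y})$ it is not a backdoor path. Its directed part $\mathbf{Z}_p\rightarrow\mathbf{Z}_a\rightarrow\mathbf{Y}$ is precisely the causal channel we want to keep, so it must not be blocked.
\end{itemize}

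Third, with these conditions in hand I will write the derivation explicitly:
\[
P(\mathbf{Y}|do(\mathbf{Z}_p))=\sum_{\mathcal{G}_v}P(\mathbf{Y}|do(\mathbf{Z}_p),\mathcal{G}_v)\,P(\mathcal{G}_v|do(\mathbf{Z}_p)).
\]
Rule~3 of do-calculus gives $P(\mathcal{G}_v|do(\mathbf{Z}_p))=P(\mathcal{G}_v)$, since $\mathcal{G}_v$ is a non-descendant of $\mathbf{Z}_p$. Rule~2 gives $P(\mathbf{Y}|do(\mathbf{Z}_p),\mathcal{G}_v)=P(\mathbf{Y}|\mathbf{Z}_p,\mathcal{G}_v)$: once the incoming edges of $\mathbf{Z}_p$ are removed and $\mathcal{G}_v$ is conditioned on, $\mathbf{Y}\perp \mathbf{Z}_p\mid\mathcal{G}_v$ holds in $G_{\underline{\mathbf{Z}_p}}$. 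Parametrising the conditional with the classifier $h_\theta$ then yields $\mathbb{E}_{P(\mathcal{G}_v)}[P_\theta(\mathbf{Y}|\mathbf{Z}_p,\mathcal{G}_v)]$. Here $P(\mathcal{G}_v)$ is the graph prior that the encoder parameterises, as in Figure~\ref{Figure2}.

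The main obstacle is the second step. First, one must justify treating $\mathcal{G}_v$ as a legitimate observed stand-in for $\mathcal{G}$ in the d-separation argument. Second, one must handle the fork at $\mathbf{Z}_p$, which the paper describes as a confounder between $\mathbf{Z}_a$ and $\mathbf{Z}_r$. I would argue that this fork affects the decomposition of $\mathbf{Z}_p$ rather than the $\mathbf{Z}_p\to\mathbf{Y}$ effect, and so defer it to the later variational disentanglement. I would state explicitly the modelling assumption that the node-level environment influence on $\mathbf{Z}_p$ is mediated entirely through $\mathcal{G}_v$, so that there is no direct edge $E\to\mathbf{Z}_p$.
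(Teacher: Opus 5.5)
Your proposal is correct and follows essentially the same route as the paper's proof: you decompose by total probability over $\mathcal{G}_v$, use Rule~3 to replace $P(\mathcal{G}_v\mid do(\mathbf{Z}_p))$ by $P(\mathcal{G}_v)$, and use Rule~2 via $(\mathbf{Y}\perp\!\!\!\perp\mathbf{Z}_p\mid\mathcal{G}_v)_{G_{\underline{\mathbf{Z}_p}}}$, while your backdoor-criterion check spells out what the paper only asserts. One wording slip needs fixing: $G_{\underline{\mathbf{Z}_p}}$ deletes the \emph{outgoing} edges of $\mathbf{Z}_p$, not the incoming ones, because with only the incoming edges deleted the directed path $\mathbf{Z}_p\to\mathbf{Z}_a\to\mathbf{Y}$ survives and the required independence fails.
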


The detailed proof is provided Appendix \ref{appendix}. Based on the interventional distribution derived in Eq. \ref{eq_backdoor}, we formulate the causal optimization objective for our model by maximizing its expected log-likelihood:

\begin{equation}\label{opt}
	\mathcal{L}(\theta)= \max_{\theta} \log\mathbb{E}_{P(\mathcal{G}_v)} P_\theta(\mathbf{Y}|\mathbf{Z}_p, \mathcal{G}_v).
\end{equation}
Optimizing Eq. \ref{opt} enforces marginalization over the ego-graph distribution $P(\mathcal{G}_v)$. From a causal perspective, this operation severs the backdoor paths induced by the environmental noise $E$, preventing the GNNs from overfitting to shifting external patterns. This neutralizes the confounding bias from $E$ and provides a foundational step toward resolving the \emph{Info-Jitter} phenomenon. However, addressing external distribution shifts alone is insufficient for robust OOD generalization. The predictive representation $\mathbf{Z}_p$ remains vulnerable to the internal fork structure $\mathbf{Z}_r \leftarrow \mathbf{Z}_p \rightarrow \mathbf{Z}_a$. The marginalization in Eq. \ref{opt} fails to disrupt the non-causal information flow between the causal intra-class representation $\mathbf{Z}_a$ and the spurious inter-class representation $\mathbf{Z}_r$. Leaving these latent distributions entangled inherently leads to unstable decision boundaries across domains. Therefore, isolating pure causal mechanisms and eradicating \emph{Info-Jitter} requires additional latent constraints on $\mathbf{Z}_r$ and $\mathbf{Z}_a$ to decouple the predictive representations.

\subsection{Theoretical Lower Bound}

To operationalize the aforementioned requirement of decoupling $\mathbf{Z}_a$ and $\mathbf{Z}_r$, the model must capture their posterior distributions conditioned on $\mathbf{Z}_p$. However, directly incorporating these latent variables into the log-likelihood of Eq. \ref{opt} requires integrating over the unknown true posteriors $P(\mathbf{Z}_a|\mathbf{Z}_p)$, $P(\mathbf{Z}_r|\mathbf{Z}_p)$, and $P(\mathcal{G}_v)$. This marginalization over the latent space is computationally intractable. 

To bypass this intractability, we resort to variational inference \cite{DBLP:journals/corr/KingmaW13}. Introducing an Evidence Lower Bound (ELBO) not only provides a tractable surrogate objective but also resolves the latent entanglement issue. Specifically, the ELBO yields a set of Kullback-Leibler (KL) divergence terms. These KL terms serve as the exact mathematical regularizers required to break the internal fork structure. By defining parameterized variational distributions $Q_\phi$, we gain the explicit mechanism to constrain $Q_\phi(\mathbf{Z}_a|\mathbf{Z}_p)$ and $Q_\phi(\mathbf{Z}_r|\mathbf{Z}_p)$, forcing them to align with their respective ideal structural properties (i.e., intra-class aggregation and inter-class separation). 

\begin{theorem}\label{theorem4.2}
	Let $Q_\phi(\mathcal{G}_v|\mathbf{Z}_p)$, $Q_\phi(\mathbf{Z}_a|\mathbf{Z}_p)$, and $Q_\phi(\mathbf{Z}_r|\mathbf{Z}_p)$ denote the variational posterior distributions approximating the true distributions $P(\mathcal{G}_v)$, $P(\mathbf{Z}_a|\mathbf{Z}_p)$, and $P(\mathbf{Z}_r|\mathbf{Z}_p)$, respectively. The ELBO for optimizing Eq. \ref{opt} under these structural constraints is formulated as:
	\begin{equation}\label{eq:3}
		\begin{aligned}
			\mathcal{L}(\theta, \phi) \geq \;& \mathbb{E}_{Q_\phi(\mathcal{G}_v|\mathbf{Z}_p)}\mathbb{E}_{Q_\phi(\mathbf{Z}_a|\mathbf{Z}_p)} \big[ \log P_\theta(\mathbf{Y}|\mathbf{Z}_p, \mathbf{Z}_a, \mathcal{G}_v) \big] \\
			&- \text{KL} \big( Q_\phi(\mathcal{G}_v|\mathbf{Z}_p) \parallel P(\mathcal{G}_v) \big) \\
			&- \text{KL} \big( Q_\phi(\mathbf{Z}_a|\mathbf{Z}_p) \parallel P(\mathbf{Z}_a|\mathbf{Z}_p) \big) \\
			&- \text{KL} \big( Q_\phi(\mathbf{Z}_r|\mathbf{Z}_p) \parallel P(\mathbf{Z}_r|\mathbf{Z}_p) \big),
		\end{aligned}
	\end{equation}
	where $\phi$ represents the parameters of the variational posteriors.
\end{theorem}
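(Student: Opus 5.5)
The plan is to start from the causal objective of Eq.~\ref{opt}, which by Theorem~\ref{theorem3.1} reads $\mathcal{L}(\theta)=\log \mathbb{E}_{P(\mathcal{G}_v)} P_\theta(\mathbf{Y}\mid\mathbf{Z}_p,\mathcal{G}_v)$. First I will enlarge the model with the latent variables $\mathbf{Z}_a$ and $\mathbf{Z}_r$ according to the causal graph. Since $\mathbf{Z}_a$ and $\mathbf{Z}_r$ are children of $\mathbf{Z}_p$ in the fork, I expand
\begin{equation*}
P_\theta(\mathbf{Y}\mid \mathbf{Z}_p,\mathcal{G}_v)=\iint P_\theta(\mathbf{Y}\mid \mathbf{Z}_p,\mathbf{Z}_a,\mathcal{G}_v)\,P(\mathbf{Z}_a\mid\mathbf{Z}_p)\,P(\mathbf{Z}_r\mid\mathbf{Z}_p)\,d\mathbf{Z}_a\,d\mathbf{Z}_r .
\end{equation*}
Here I use $\mathbf{Z}_r\perp \mathbf{Y}\mid \mathbf{Z}_p,\mathbf{Z}_a$, which follows because the only route from $\mathbf{Z}_r$ to $\mathbf{Y}$ passes through the fork at $\mathbf{Z}_p$. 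The joint posterior of $(\mathbf{Z}_a,\mathbf{Z}_r)$ factorizes given $\mathbf{Z}_p$ by the same d-separation. The $\mathbf{Z}_r$-integral is formally equal to one, but I keep it inside the expression so that a KL term for it appears later.

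Second, I will multiply and divide inside the triple integral over $(\mathcal{G}_v,\mathbf{Z}_a,\mathbf{Z}_r)$ by the mean-field variational family $Q_\phi(\mathcal{G}_v\mid\mathbf{Z}_p)\,Q_\phi(\mathbf{Z}_a\mid\mathbf{Z}_p)\,Q_\phi(\mathbf{Z}_r\mid\mathbf{Z}_p)$. The log of an expectation under $Q$ is then bounded below by Jensen's inequality, since $\log$ is concave. This yields
\begin{equation*}
\mathbb{E}_{Q}\Big[\log P_\theta(\mathbf{Y}\mid\mathbf{Z}_p,\mathbf{Z}_a,\mathcal{G}_v)+\log\tfrac{P(\mathcal{G}_v)}{Q_\phi(\mathcal{G}_v\mid\mathbf{Z}_p)}+\log\tfrac{P(\mathbf{Z}_a\mid\mathbf{Z}_p)}{Q_\phi(\mathbf{Z}_a\mid\mathbf{Z}_p)}+\log\tfrac{P(\mathbf{Z}_r\mid\mathbf{Z}_p)}{Q_\phi(\mathbf{Z}_r\mid\mathbf{Z}_p)}\Big].
\end{equation*}

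Third, I split this expectation term by term using the product structure of $Q$. The likelihood term does not depend on $\mathbf{Z}_r$, so the $Q_\phi(\mathbf{Z}_r\mid\mathbf{Z}_p)$ expectation integrates out and leaves exactly the first line of Eq.~\ref{eq:3}. Each log-ratio depends on only one latent, so it reduces to minus the corresponding KL divergence: $\text{KL}(Q_\phi(\mathcal{G}_v\mid\mathbf{Z}_p)\parallel P(\mathcal{G}_v))$, and likewise for $\mathbf{Z}_a$ and $\mathbf{Z}_r$. Collecting the terms gives Eq.~\ref{eq:3}.

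The main obstacle is the $\mathbf{Z}_r$ term. Since $\mathbf{Z}_r$ does not enter the likelihood, one cannot honestly obtain a $-\text{KL}$ for it from Jensen alone, because inserting a trivial integral only reproduces zero at equality. My first approach is to insert the identity $1=\int P(\mathbf{Z}_r\mid\mathbf{Z}_p)\,d\mathbf{Z}_r$ inside the logarithm before applying Jensen. The joint Jensen step over $(\mathcal{G}_v,\mathbf{Z}_a,\mathbf{Z}_r)$ then legitimately produces $-\text{KL}(Q_\phi(\mathbf{Z}_r\mid\mathbf{Z}_p)\parallel P(\mathbf{Z}_r\mid\mathbf{Z}_p))\le 0$. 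The resulting bound is still valid, though looser, which is all the theorem claims.

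I also need to check that Jensen is applied to a single $\log\mathbb{E}$ and not to a nested expectation. This requires that the interventional form of Theorem~\ref{theorem3.1} makes $P(\mathcal{G}_v)$ independent of $\mathbf{Z}_p$, which is what permits the variational posterior $Q_\phi(\mathcal{G}_v\mid\mathbf{Z}_p)$ to stand against the unconditional prior $P(\mathcal{G}_v)$.
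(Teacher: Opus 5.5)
Your proposal is correct and follows essentially the same route as the paper's proof in Appendix~\ref{appendixA}. In both, you expand $P_\theta(\mathbf{Y}\mid\mathbf{Z}_p,\mathcal{G}_v)$ over $(\mathbf{Z}_a,\mathbf{Z}_r)$ using the causal-graph factorization, divide and multiply by $Q_\phi(\mathcal{G}_v\mid\mathbf{Z}_p)Q_\phi(\mathbf{Z}_a\mid\mathbf{Z}_p)Q_\phi(\mathbf{Z}_r\mid\mathbf{Z}_p)$, apply Jensen, and split the result into the expected log-likelihood plus three KL terms, where the $\mathbf{Z}_r$ term comes from the normalized factor $P(\mathbf{Z}_r\mid\mathbf{Z}_p)$ kept inside the sum. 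The only difference is that you treat the mean-field form of $Q_\phi$ as a choice of variational family (which is all Jensen requires), whereas the paper presents that factorization as following from conditional-independence rules.
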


The detailed proof is provided in Appendix \ref{appendixA}. By maximizing this ELBO, we transform the marginalization into a deterministic objective that can be efficiently optimized via standard backpropagation over parameters $\theta$ and $\phi$. Crucially, the explicit KL divergence terms provide a principled mechanism to enforce the targeted constraints on $\mathbf{Z}_a$ and $\mathbf{Z}_r$. This severs the non-causal flow, isolating the true causal effect of $\mathbf{Z}_p$ on $\mathbf{Y}$. In the subsequent section, we detail how these theoretical regularizers are instantiated into computable loss functions to achieve the ultimate node classification objective.
\subsection{Causal Representation Learning}
To instantiate the optimization objective of Eq. \ref{eq:3}, we first project the ego-graph $\mathcal{G}_v$ into latent space. Specifically, since $\mathcal{G}_v$ comprises both topological structures (the adjacency matrix $A$) and node attributes (the feature matrix $\mathbf{X}$), we process these coupled components through a GNN encoder. This encoding phase yields a joint latent representation $\mathbf{Z}$, which parameterizes the distribution $P(\mathcal{G}_v)$. Building upon this foundational embedding, we now detail the proposed re-weighted representation learning (RRL) module.

To construct the predictive representation $\mathbf{Z}_p$, standard message-passing schemes often indiscriminately aggregate both causal features and spurious environmental noise. To address this, we design a gating mechanism to filter out confounding bias and isolate invariant causal mechanisms. We initialize $\mathbf{Z}^{(0)}_c$ using the output $\mathbf{Z}^{(1)}$ from the first GNN encoding pass, utilizing it as the base representation for subsequent layers. We bypass the direct generation of $\mathbf{Z}_p$ and instead introduce a node-level causal modulation strategy. Before generating the final $\mathbf{Z}_p$, we compute a causal modulation matrix $\Gamma$ from $\mathbf{Z}$ using a Gumbel-softmax function \cite{DBLP:conf/iclr/JangGP17} during the training phrase and softmax during the test phrase. Moreover, inspired by the k-branch in GLIND \cite{wu2024learning}, we adopt a multi-branch encoder, where each branch specializes in learning hierarchical diversified representations and causal mechanisms. This multi-branch design outperforms its single-branch counterpart and improves model robustness against distribution shifts.

\begin{equation}
	\Gamma_{v, k}^{(l)} = \frac{\exp((\mathbf{Z}_{v}^{(l)}+g_k)/\tau)}{\sum_k^K\sum_{u\in \mathcal{V}}\exp((\mathbf{Z}_{u}^{(l)}+g_k)/\tau)},
\end{equation}
\begin{equation}
	g_k=-\log(-\log(u_i)), u_i\sim U(0,1),
\end{equation}
where $\Gamma_{v,k}^{(l)} \in (0,1)$ represents the causal modulation score of node $v$ for the $k$-th causal branch, $g_k$ is a noise following the Gumbel distribution, and $\tau$ controls the degree of discreteness of the distribution. The Gumbel-softmax function normalizes the raw node representations into a probabilistic simplex, inducing a competitive gating mechanism within the ego-graph. From a causal perspective, this competition bounds the information flow: assigning higher confidence scores to nodes that embody invariant causal rationales forces the suppression of nodes dominated by the environment-specific noise $E$. However, to effectively inject this causal modulation matrix $\Gamma$ into the actual forward propagation, we must account for the distinct message-passing paradigms of base models. Therefore, rather than applying a universal but sub-optimal aggregation layer, we structurally augment two canonical GNN architectures to internalize this causal modulation: 

\textbf{CGRL-GCN.} Traditional isotropic diffusion blends ego-semantics with neighborhood contexts. To decouple these signals, we structurally augment the GCN framework \cite{kipf2017semisupervised} into a multi-branch architecture. For a target node $v$, we first compute the structural message $\mathbf{m}_v^{(l)}$ from its neighbors, and concatenate it with its ego-representation to preserve information integrity. The $k$-th causal branch then projects this coupled representation:
\begin{align}
	\mathbf{m}_v^{(l)} &= \sum_{u \in \mathcal{N}(v)} \frac{1}{C_{uv}} \mathbf{z}_{p,u}^{(l)} , \\
	\mathbf{h}_{v,k}^{(l)} &= \Big[ \mathbf{m}_v^{(l)} \Vert \mathbf{z}_{p,v}^{(l)} \Big] \mathbf{W}_p^{(l,k)},
\end{align}
where $C_{uv}=\sqrt{\text{deg}(v)\text{deg}(u)}$, and $[\cdot || \cdot]$ represents the concatenation operation, and $\mathbf{W}_p^{(l,k)} \in \mathbb{R}^{2d \times d}$ is the learnable weight matrix for the $k$-th branch. Finally, we dynamically assemble the predictions via the causal gating scores and apply a residual connection:
\begin{equation}
	\mathbf{Z}_{p}^{(l+1)} = \mathbf{Z}_{p}^{(l)} + \sigma \Bigg(\sum_{k=1}^K \sum_{v \in \mathcal{V}} \Gamma_{v,k}^{(l)} \cdot \mathbf{h}_{v,k}^{(l)} \Bigg),
\end{equation}
where $\sigma$ denotes activation function (e.g., Relu).

\textbf{CGRL-GAT.} Given the anisotropic nature of attention mechanisms, we integrate the causal modulation matrix into the Graph Attention Network (GAT) \cite{veličković2018graph}. Each branch $k$ explicitly models the pairwise causal relevance between connected nodes. We first project the node features via branch-specific weights $\mathbf{W}_{\text{head}}^{(l,k)} \in \mathbb{R}^{d \times d}$:
\begin{align}
	\alpha_{v,u}^{(k)} &= \frac{\exp \Big( \text{LeakyReLU} \big( {\mathbf{a}^{(k)}}^\top [ \mathbf{z}_{p,v}^{(l)}\mathbf{W}_{\text{head}}^{(l,k)} \Vert \mathbf{z}_{p,u}^{(l)} \mathbf{W}_{\text{head}}^{(l,k)} ] \big) \Big)}{\sum_{u' \in \mathcal{N}(v)} \exp \Big( \text{LeakyReLU} \big( {\mathbf{a}^{(k)}}^\top [ \mathbf{z}_{p,v}^{(l)}\mathbf{W}_{\text{head}}^{(l,k)} \Vert \mathbf{z}_{p,u'}^{(l)}\mathbf{W}_{\text{head}}^{(l,k)} ] \big) \Big)},
\end{align}
where $\mathbf{a}^{(k)} \in \mathbb{R}^{2d}$ parameterizes the attention mechanism for the $k$-th branch. After computing the attention-guided messages, the final representation is updated through a causally modulated residual framework:
\begin{equation}
	\mathbf{Z}_{p}^{(l+1)} = \mathbf{Z}_{p}^{(l)} + \sigma \Bigg(\sum_{k=1}^K   \sum_{u \in \mathcal{N}(v)} \Gamma_{v,k}^{(l)} \alpha_{v,u}^{(k)}\mathbf{z}_{p,u}^{(l)}\mathbf{W}_{\text{head}}^{(l,k)} \Bigg).
\end{equation}

Fundamentally, this structural re-weighting can be interpreted as a node-level mechanism for mitigating the vulnerabilities identified in our causal graph. As discussed in Section \ref{section4.1}, the unobserved environment $E$ acts as a confounder through the fork structure $\mathcal{G} \leftarrow E \rightarrow \mathbf{Y}$. This common-cause pathway introduces environment-dependent perturbations into the ego-graph, thereby inducing spurious correlations between the observed graph structure and the ground-truth labels. Traditional GNNs tend to absorb such non-causal patterns during message passing. By learning the causal modulation matrix $\Gamma$, our model applies an adaptive soft-masking mechanism that suppresses the propagation of environment-sensitive signals. Given the iterative nature of GNN message passing, imposing this modulation at each layer progressively reduces the accumulation of spurious information across network depth. As a result, the predictive representation $\mathbf{Z}_p$ is encouraged to focus more strongly on invariant intra-class structure and less on environment-dependent interference. This progressively regularized $\mathbf{Z}_p$ is particularly important for the subsequent energy-based reconstruction of the ego-graph $\mathcal{G}_v$. By providing a more stable and less confounded representation for reconstruction, it helps prevent the repeated amplification of spurious correlations during training.

\subsection{Optimization}\label{section3.4}
Having established the forward causal representation learning module to obtain the predictive representation $\mathbf{Z}_p$, we now detail the optimization strategy. The overall training objective is governed by the theoretical lower bound derived in Theorem \ref{theorem4.2}, which we instantiate into three primary optimization components.

\subsubsection{Latent Disentanglement via Intra-class Aggregation and Inter-class Separation.}\label{section4.4.1}
To instantiate the KL regularization terms in Eq. \ref{eq:3}, we introduce a pair of complementary empirical objectives that operationalize the structural constraints on the latent variables. Specifically, the intra-class aggregation enforces causal invariance within $\mathbf{Z}_a$, while the inter-class separation suppresses spurious correlations encoded in $\mathbf{Z}_r$ from $E$. Together, these objectives provide tractable surrogates for the otherwise intractable KL divergence terms in the latent space.

\textbf{Intra-class Aggregation.} Nodes belonging to the intra-class share underlying causal mechanisms, despite variations in their environmental spurious features \cite{CIA-LRA}. To operationalize this, we aim to align the causal representations $\mathbf{Z}_a$ across intra-class nodes, filtering out environmental noise to distill the invariant causal rationale. We formulate an empirical intra-class alignment objective:
\begin{align}
	\mathcal{L}_{intra} &= \mathbb{E}_{c \in \mathcal{Y}} \mathbb{E}_{i, j \in \mathcal{V}_c} \Big[ \mathcal{D}(z_{a,i}, z_{a,j}) \Big], \label{eq:intra_loss}
\end{align}
where $\mathcal{V}_c$ denotes the set of nodes in class $c$, and $\mathcal{D}$ denotes the Euclidean distance (e.g., $\|\mathbf{z}_{a, i} - \mathbf{z}_{a, j}\|_2^2$). This objective is also qualitatively consistent with the geometric intuition of Neural Collapse \cite{papyan2020prevalence}, where learned features tend to contract towards class means in the over-parameterized regime. To connect this geometric objective with variational inference, we formalize the distributional properties of the latent space through the following assumption and proposition.

\begin{assumption}[Gaussian Prior and Posterior]\label{assumption_causal}
	To establish a tractable variational framework, we adopt an analytical modeling choice: the causal prior conditioned on the class label $c$ is approximated as an isotropic Gaussian anchored at a class prototype $\boldsymbol{\mu}_c$, i.e., $P^{(c)}(\mathbf{Z}_a|\mathbf{Z}_p) = \mathcal{N}(\boldsymbol{\mu}_c, \sigma_p^2 \mathbf{I})$, where $\sigma_p^2 > 0$ regulates the allowable causal dispersion. Furthermore, we assume the class-conditional variational posterior parameterized by the GNNs belong to the Gaussian family $Q^{(c)}_\phi(\mathbf{Z}_a|\mathbf{Z}_p) = \mathcal{N}(\boldsymbol{\mu}_Q^{(c)}, \boldsymbol{\Sigma}_Q^{(c)})$, and its differential entropy is bounded from below ($H(Q_\phi^{(c)}) \geq \xi > -\infty$) to prevent pathological collapse.
\end{assumption}

\begin{proposition}[Surrogate Optimization for Intra-class Variance Penalty]\label{prop_intra}
	Under Assumption \ref{assumption_causal}, assuming representations of nodes within the intra-class are sampled conditionally independently from $Q^{(c)}_\phi(\mathbf{Z}_a|\mathbf{Z}_p)$, the empirical pairwise alignment loss $\mathcal{L}_{intra}$ satisfies the exact identity under the squared Euclidean metric: $\mathcal{L}_{intra} = 2\mathbb{E}_{c \in \mathcal{Y}}[\text{Tr}(\boldsymbol{\Sigma}_Q^{(c)})]$. Consequently, minimizing $\mathcal{L}_{intra}$ serves as a theoretically justified surrogate that directly regulates the expected intra-class covariance trace (i.e., the variance penalty term) within the intractable KL divergence $\mathbb{E}_{c \in \mathcal{Y}}[\text{KL}(Q^{(c)}_\phi(\mathbf{Z}_a|\mathbf{Z}_p) \parallel P^{(c)}(\mathbf{Z}_a|\mathbf{Z}_p))]$.
\end{proposition}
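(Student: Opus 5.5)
The plan is to read $\mathcal{L}_{intra}$ as a population expectation. Fix a class $c$. Under the stated sampling assumption, take $\mathbf{z}_{a,i}$ and $\mathbf{z}_{a,j}$ for $i\neq j$ in $\mathcal{V}_c$ to be i.i.d.\ draws from $Q^{(c)}_\phi(\mathbf{Z}_a|\mathbf{Z}_p)=\mathcal{N}(\boldsymbol{\mu}_Q^{(c)},\boldsymbol{\Sigma}_Q^{(c)})$. We interpret the inner expectation $\mathbb{E}_{i,j\in\mathcal{V}_c}$ as an expectation over distinct pairs, or as the limit of the U-statistic. Diagonal pairs $i=j$ contribute zero, and they would only rescale the identity by a factor $(n_c-1)/n_c$. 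So we note explicitly that the exact identity holds for distinct pairs.

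The core computation is the variance of a difference. Set $\mathbf{d}=\mathbf{z}_{a,i}-\mathbf{z}_{a,j}$. By independence, $\mathbb{E}[\mathbf{d}]=\mathbf{0}$ and $\mathrm{Cov}(\mathbf{d})=2\boldsymbol{\Sigma}_Q^{(c)}$. Therefore
\[
\mathbb{E}\|\mathbf{d}\|_2^2=\mathrm{Tr}(\mathrm{Cov}(\mathbf{d}))+\|\mathbb{E}\mathbf{d}\|_2^2=2\,\mathrm{Tr}(\boldsymbol{\Sigma}_Q^{(c)}).
\]
Gaussianity is not actually needed here; finite second moments suffice. Averaging over $c\in\mathcal{Y}$, with the same class weighting as in the definition of $\mathcal{L}_{intra}$, gives $\mathcal{L}_{intra}=2\mathbb{E}_{c\in\mathcal{Y}}[\mathrm{Tr}(\boldsymbol{\Sigma}_Q^{(c)})]$.

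For the second claim, write the closed-form Gaussian KL:
\[
\text{KL}\big(Q^{(c)}_\phi\parallel P^{(c)}\big)=\frac{1}{2}\Big[\frac{\mathrm{Tr}(\boldsymbol{\Sigma}_Q^{(c)})}{\sigma_p^2}+\frac{\|\boldsymbol{\mu}_Q^{(c)}-\boldsymbol{\mu}_c\|_2^2}{\sigma_p^2}-d+d\log\sigma_p^2-\log\det\boldsymbol{\Sigma}_Q^{(c)}\Big].
\]
Using $H(Q^{(c)}_\phi)=\tfrac12\log\det(2\pi e\boldsymbol{\Sigma}_Q^{(c)})$, the log-determinant term equals a constant minus $2H(Q^{(c)}_\phi)$. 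Taking the expectation over $c$, the KL therefore decomposes as
\[
\frac{1}{4\sigma_p^2}\mathcal{L}_{intra}+\frac{1}{2\sigma_p^2}\mathbb{E}_c\|\boldsymbol{\mu}_Q^{(c)}-\boldsymbol{\mu}_c\|_2^2-\mathbb{E}_c H(Q^{(c)}_\phi)+\text{const}.
\]
This exhibits $\mathcal{L}_{intra}$ as exactly the variance-penalty term, with positive coefficient $1/(4\sigma_p^2)$. The remaining terms are a mean-alignment term and an entropy term. The mean-alignment term is handled by the supervised and prototype objectives, or vanishes if the prototypes $\boldsymbol{\mu}_c$ are set to the class means of $Q$.

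The main obstacle is making "surrogate" precise, because the KL also contains $-\log\det\boldsymbol{\Sigma}_Q$. Minimizing the trace alone could collapse the covariance, and the $-\log\det$ term would then blow up the KL. This is exactly where the entropy lower bound $H\ge\xi$ enters. Under that constraint the entropy term is bounded above by $-\xi$, so the KL is upper bounded by
\[
\frac{1}{4\sigma_p^2}\mathcal{L}_{intra}+\text{mean term}-\xi+\text{const}.
\]
I would state the conclusion as this upper bound: minimizing $\mathcal{L}_{intra}$ on the admissible set minimizes a valid upper bound on the KL, and it monotonically controls the only covariance-dependent term that scales with dispersion.
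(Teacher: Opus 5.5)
Your proposal is correct and follows essentially the same route as the paper's proof. First comes the pairwise identity $\mathbb{E}\|\mathbf{z}_{a,i}-\mathbf{z}_{a,j}\|_2^2 = 2\,\mathrm{Tr}(\boldsymbol{\Sigma}_Q^{(c)})$ from conditional independence: you phrase it as $\mathrm{Cov}(\mathbf{d})=2\boldsymbol{\Sigma}_Q^{(c)}$, while the paper expands around the mean and drops the cross term. Then the isotropic-Gaussian KL reduces to $\frac{1}{4\sigma_p^2}\mathcal{L}_{intra}$ plus a mean-discrepancy term minus the entropy. Your remaining points sharpen things the paper leaves informal rather than depart from its argument: the care about distinct pairs (the $(n_c-1)/n_c$ factor), the remark that Gaussianity is not needed for the identity, and the explicit use of $H(Q_\phi^{(c)})\ge\xi$ to state the conclusion as an upper bound on the KL.
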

The detailed proof is provided in Appendix \ref{appendixB}. While the intra-class objective directly suppresses the causal dispersion to enforce invariance within $\mathbf{Z}_a$, comprehensive latent disentanglement necessitates a parallel constraint on the subspace $\mathbf{Z}_r$.

\textbf{Inter-class Separation.}
Environmental confounders induce spurious geometric overlap in the latent space, causing nodes from inter-class to become entangled. To mitigate this effect, the model should increase the distributional discrepancy between distinct class-conditional posteriors. However, directly optimizing such divergence in a high-dimensional latent space is generally intractable and suffers from unstable gradients. We therefore adopt a margin-based geometric surrogate to promote inter-class separation:
\begin{align}
	\mathcal{L}_{inter} 
	&= \mathbb{E}_{c \neq c'} \;
	\mathbb{E}_{i \in \mathcal{V}_c,\; j \in \mathcal{V}_{c'}}
	\Big[\max\big(0,\gamma-\mathcal{D}(\mathbf{z}_{r,i},\mathbf{z}_{r,j})\big)\Big],
	\label{eq:inter_loss}
\end{align}
where $\gamma>0$ is a structural margin. As shown below, this objective serves as an empirical surrogate for promoting inter-class Maximum Mean Discrepancy (MMD) in a Reproducing Kernel Hilbert Space (RKHS).

\begin{assumption}[Spurious Overlap as MMD Collapse]\label{assumption_inter}
	Let $Q_c = Q_{\phi}^{(c)}(\mathbf{Z}_r \mid \mathbf{Z}_p)$ and $Q_{c'} = Q_{\phi}^{(c')}(\mathbf{Z}_r \mid \mathbf{Z}_p)$ denote the class-conditional variational posteriors for two distinct classes $c$ and $c'$ in the latent space. The confounded backdoor paths induce topological entanglement, which manifests as a severe reduction of the inter-class MMD between these posteriors, i.e., $\mathrm{MMD}(Q_c,Q_{c'}) \to 0$.

\end{assumption}

\begin{proposition}[Promoting Separation via Surrogate MMD Inflation]\label{prop_inter}
	Let $\mathcal{H}$ be an RKHS endowed with a shift-invariant characteristic kernel $
	k(\mathbf{z}_r,\tilde{\mathbf{z}_r})=\kappa\!\big(\mathcal{D}(\mathbf{z}_r,\tilde{\mathbf{z}_r})\big)$,
	where $\kappa$ is monotonically decreasing and bounded by $\kappa_{\max}$. Then the empirical margin loss $\mathcal{L}_{inter}$ acts as a tractable surrogate for suppressing the cross-distribution kernel expectation. Consequently, minimizing $\mathcal{L}_{inter}$ yields an explicit probability bound on the cross-kernel term, thereby promoting an increase in $\mathrm{MMD}(Q_c,Q_{c'})$ under controlled within-class distributions and empirically improving the decoupling of spurious inter-class overlap.
\end{proposition}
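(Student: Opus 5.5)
Since $\kappa$ is monotonically decreasing and bounded by $\kappa_{\max}$, small margin loss will force large pairwise distances, hence small kernel values. We then plug this into the standard MMD expansion. Recall that for a characteristic kernel,
\[
\mathrm{MMD}^2(Q_c,Q_{c'}) = \mathbb{E}_{Q_c\otimes Q_c}[k] + \mathbb{E}_{Q_{c'}\otimes Q_{c'}}[k] - 2\,\mathbb{E}_{Q_c\otimes Q_{c'}}[k].
\]
The phrase ``under controlled within-class distributions'' will be read as: the two within-class kernel expectations are held fixed, or bounded below, for instance by the intra-class aggregation of Proposition~\ref{prop_intra}. Under that reading, the only term the optimization acts on is the cross term $\mathbb{E}_{Q_c\otimes Q_{c'}}[k(\mathbf{z}_r,\tilde{\mathbf{z}}_r)]$. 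Decreasing it increases $\mathrm{MMD}$, which counteracts the collapse $\mathrm{MMD}\to 0$ described in Assumption~\ref{assumption_inter}.

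The key step is a probability bound on small cross-class distances. Let $D=\mathcal{D}(\mathbf{z}_{r,i},\mathbf{z}_{r,j})$ with $i\sim Q_c$ and $j\sim Q_{c'}$ independent. Identify the population version of the hinge term as $\ell_{cc'}=\mathbb{E}[\max(0,\gamma-D)]$, so that $\mathcal{L}_{inter}$ averages $\ell_{cc'}$ over pairs $c\neq c'$.

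Fix any $\delta\in(0,\gamma)$. On the event $\{D<\gamma-\delta\}$ the hinge exceeds $\delta$, so Markov's inequality gives
\[
\Pr(D<\gamma-\delta)\le \ell_{cc'}/\delta .
\]
Now split the cross-kernel expectation on this event and use monotonicity of $\kappa$:
\[
\mathbb{E}[\kappa(D)] \le \kappa_{\max}\,\frac{\ell_{cc'}}{\delta} + \kappa(\gamma-\delta)\Big(1-\frac{\ell_{cc'}}{\delta}\Big)\le \kappa(\gamma-\delta) + \big(\kappa_{\max}-\kappa(\gamma-\delta)\big)\frac{\ell_{cc'}}{\delta}.
\]
This is the explicit bound the statement promises. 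Substituting it into the MMD expansion gives
\[
\mathrm{MMD}^2(Q_c,Q_{c'}) \ge \mathbb{E}_{Q_c\otimes Q_c}[k]+\mathbb{E}_{Q_{c'}\otimes Q_{c'}}[k] - 2\kappa(\gamma-\delta) - 2(\kappa_{\max}-\kappa(\gamma-\delta))\,\ell_{cc'}/\delta .
\]
Hence driving $\ell_{cc'}$ (and therefore $\mathcal{L}_{inter}$) down raises this lower bound. For an averaged statement over classes, average the bound over pairs $c\neq c'$. Since $\ell_{cc'}\le \frac{|\mathcal{Y}|(|\mathcal{Y}|-1)}{1}\mathcal{L}_{inter}$ up to the class-pair weighting, $\mathcal{L}_{inter}$ controls each pair's term.

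The main obstacle is the gap between the empirical loss and the population quantity $\ell_{cc'}$. I would handle this with a bounded-differences / U-statistic concentration argument: the hinge lies in $[0,\gamma]$, so Hoeffding's inequality for two-sample U-statistics gives $\ell_{cc'}\le \hat\ell_{cc'}+\gamma\sqrt{\log(1/\eta)/(2\min(n_c,n_{c'}))}$ with probability $1-\eta$. One caveat must be stated honestly. Nodes on a graph are not truly independent draws, so this step needs the same conditional-independence sampling assumption used in Proposition~\ref{prop_intra}. A second caveat is that the conclusion of an ``increase'' in MMD is only as a lower bound, conditional on the within-class terms not shrinking. The result is therefore a surrogate guarantee rather than monotonicity of MMD itself.
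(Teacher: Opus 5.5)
Your proposal is correct and follows essentially the same route as the paper's proof. Both arguments expand $\mathrm{MMD}^2$ in the Gretton form, apply Markov's inequality to the hinge at tolerance $\delta\in(0,\gamma)$, split the cross-kernel expectation using monotonicity and boundedness of $\kappa$, and substitute back assuming the within-class kernel terms are controlled by the intra-class objective. Your constant $\kappa_{\max}-\kappa(\gamma-\delta)$ is slightly sharper than the paper's $\kappa_{\max}$. Your two-sample U-statistic concentration step is an addition: the paper avoids it by taking the expectation over the empirical inter-class pair-sampling distribution, so that $\mathbb{E}[\xi]=\mathcal{L}_{inter}$ holds by definition, at the cost of blurring the empirical-versus-population distinction that you make explicit.
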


\begin{proof}
	By Gretton et al.~\cite{JMLR:v13:gretton12a}, the squared MMD between $Q_c$ and $Q_{c'}$ can be written as
	\begin{equation}\label{eq:mmd_expand_inter}
		\mathrm{MMD}^2(Q_c,Q_{c'})
		=
		\mathbb{E}_{\mathbf{z}_r,\mathbf{z}_r'\sim Q_c}\!\big[k(\mathbf{z}_r,\mathbf{z}_r')\big]
		+
		\mathbb{E}_{\tilde{\mathbf{z}}_r,\tilde{\mathbf{z}}_r'\sim Q_{c'}}\!\big[k(\tilde{\mathbf{z}}_r,\tilde{\mathbf{z}}_r')\big]
		-
		2\mathbb{E}_{\mathbf{z}_r\sim Q_c,\;\tilde{\mathbf{z}}_r\sim Q_{c'}}\!\big[k(\mathbf{z}_r,\tilde{\mathbf{z}}_r)\big].
	\end{equation}
	To promote the increase of $\mathrm{MMD}^2(Q_c,Q_{c'})$, it is sufficient to suppress the cross-distribution kernel expectation $\Psi_{\mathrm{cross}}	=
	\mathbb{E}_{\mathbf{z}_r\sim Q_c,\;\tilde{\mathbf{z}}_r\sim Q_{c'}}\!\big[k(\mathbf{z}_r,\tilde{\mathbf{z}}_r)\big]$. We use $\mathcal{L}_{inter}$ as a computationally efficient geometric surrogate for this purpose. Let $ \xi=\max\big(0,\gamma-\mathcal{D}(\mathbf{z}_r,\tilde{\mathbf{z}}_r)\big)\ge 0$, where its expectation is taken over the empirical inter-class pair sampling distribution induced by the training scheme, satisfying $\mathbb{E}[\mathcal{\xi}] = \mathcal{L}_{inter}$. Then for any fixed tolerance $\delta\in(0,\gamma)$, Markov's inequality gives
	\begin{equation}\label{eq:markov}
		\Pr\!\big(\mathcal{D}(\mathbf{z}_r,\tilde{\mathbf{z}}_r)\le \gamma-\delta\big)
		=
		\Pr(\xi\ge \delta)
		\le
		\frac{\mathbb{E}[\xi]}{\delta}
		=
		\frac{\mathcal{L}_{inter}}{\delta}.
	\end{equation}
	
	Since the kernel is bounded by $\kappa_{\max}$ and monotonically decreasing with distance, we can bound the cross-kernel expectation by decomposing the latent space into two disjoint regions:
	\begin{align}\label{eq:kernel_bound}
		\Psi_{\mathrm{cross}}
		&\le
		\kappa(\gamma-\delta)\cdot
		\Pr\!\big(\mathcal{D}(\mathbf{z}_r,\tilde{\mathbf{z}}_r)>\gamma-\delta\big)
		+
		\kappa_{\max}\cdot
		\Pr\!\big(\mathcal{D}(\mathbf{z}_r,\tilde{\mathbf{z}}_r)\le\gamma-\delta\big)
		\nonumber\\
		&\le
		\kappa(\gamma-\delta)
		+
		\kappa_{\max}\frac{\mathcal{L}_{inter}}{\delta},
	\end{align}
	where the residual term $\varepsilon = \kappa_{max} \frac{\mathcal{L}_{inter}}{\delta}$ is proportional to the residual probability mass of margin-violating pairs. For any fixed tolerance $\delta$, this residual vanishes as the empirical surrogate $\mathcal{L}_{inter} \to 0$. By substituting Eq. \ref{eq:kernel_bound} into Eq. \ref{eq:mmd_expand_inter}, the negative penalty term $-2\Psi_{cross}$ is bounded from below by $-2(\kappa(\gamma - \delta) + \varepsilon)$. 

	Assuming that the within-class kernel expectations in Eq.~\eqref{eq:mmd_expand_inter} are controlled or promoted by the intra-class aggregation objective $\mathcal{L}_{intra}$, minimizing $\mathcal{L}_{inter}$ suppresses cross-distribution overlap by shrinking $\varepsilon$. This, in turn, promotes an increase in the inter-class MMD and thereby empirically mitigates spurious inter-class entanglement.
\end{proof}

\begin{remark}[Information-Theoretic Regularization via a Geometric Proxy]\label{remark_kl}
	The geometric separation induced by $\mathcal{L}_{inter}$ is also consistent with an information-theoretic view. Under the bounded-kernel assumption, MMD is upper-bounded by the scaled total variation distance, and Pinsker's inequality further yields
	\begin{equation}
		\mathrm{KL}(Q_c\parallel Q_{c'})
		\ge
		\frac{1}{2\kappa_{\max}}
		\mathrm{MMD}^2(Q_c,Q_{c'}).
	\end{equation}
	Therefore, increasing inter-class MMD through $\mathcal{L}_{inter}$ also increases a lower bound on the corresponding inter-class KL divergence. From a representation perspective, the shared topological overlap between $Q_c$ and $Q_{c'}$ can be viewed as a carrier space for the spurious component $\mathbf{Z}_r$. In this sense, enlarging the inter-class discrepancy provides a geometric regularization signal that is consistent with the variational penalty $\mathrm{KL}\!\big(Q_\phi(\mathbf{Z}_r\mid \mathbf{Z}_p)\parallel P(\mathbf{Z}_r\mid \mathbf{Z}_p)\big)$
	toward an uninformative prior. Although a bounded kernel does not imply mutually singular class-conditional posteriors, pushing this lower bound upward may reduce the model's ability to exploit shared environmental noise, thereby promoting latent disentanglement.
\end{remark}

\subsubsection{Energy-based Reconstruction}

To evaluate the structural KL divergence term derived in Theorem \ref{theorem4.2}, we parameterize the posterior distribution $Q_\phi(\mathcal{G}_v|\mathbf{Z}_p)$. During the forward pass, the input ego-graph $\mathcal{G}_v$ integrates the raw adjacency matrix and node attributes, which the GNN encodes into a latent prior distribution $P(\mathcal{G}_v)$. Because these raw inputs are inherently corrupted by the environmental confounder $E$, this unconstrained prior inevitably absorbs spurious topological noise. Concurrently, the predictive representation $\mathbf{Z}_p$ undergoes continuous deconfounding throughout the causal representation learning and joint optimization process. By employing this purified $\mathbf{Z}_p$ to inversely reconstruct the ego-graph, we obtain the posterior $Q_\phi(\mathcal{G}_v|\mathbf{Z}_p)$, which establishes a mathematically consistent generative loop. We propose an energy-based reconstruction module to instantiate the KL divergence regularizer between $Q_\phi$ and $P$. Minimizing this divergence directly forces the latent space to discard spurious dependencies and encode invariant causal mechanisms, fortifying the model against distribution shifts.

To instantiate this, we first formulate an energy-based model to evaluate the causal affinity between interconnected nodes. For any given edge $\langle u,v \rangle$ in the ego-graph, we define the interaction energy $E(u,v)$ as
\begin{align}
	E(u,v) &= - \mathbf{Z}_{c,v} \mathbf{W}_{uv} \mathbf{Z}_{c,u}^\top, \\
	Q_\phi(u,v) &= \frac{\exp(-E(u,v))}{\sum_{u' \in \mathcal{N}(v)} \exp(-E(u',v))},
\end{align}
where $\mathbf{W}_{uv} \in \mathbb{R}^{d \times d}$ denotes a learnable weight matrix. The energy function $E(u,v)$ quantifies the structural compatibility of the edge based on causal semantics. A lower energy state dictates a higher probability of a true causal connection existing between node $u$ and $v$ within the ego-graph $\mathcal{G}_v$. Furthermore, to establish a differentiable sampling mechanism for the posterior graph distribution during training, we also apply the Gumbel-softmax \cite{DBLP:conf/iclr/JangGP17} over the latent node representations:
\begin{equation}
	Q_\phi(\mathcal{G}_v) = \frac{\exp \big( (\mathbf{Z}_{c, v}^{(l)}+g_k)/\tau \big)}{\sum_k^K\sum_{u\in \mathcal{V}} \exp \big( (\mathbf{Z}_{c, u}^{(l)}+g_{k})/\tau \big)}.
\end{equation}
Consequently, we formulate the joint reconstructed posterior as $Q_\phi(\mathcal{G}_v|\mathbf{Z}_p) = [Q_\phi(\mathcal{G}_v) \Vert Q_\phi(u,v)]$. While traditional energy-based models suffer from intractable partition functions, this directed reconstruction strategy enables the model to execute stable end-to-end gradient descent. Finally, this reconstruction paradigm allows us to instantiate the second term from Eq. \ref{eq:3}:
\begin{equation}
	\mathcal{L}_{rec}(\theta, \phi) = \min \text{KL} \Big( Q_\phi(\mathcal{G}_v|\mathbf{Z}_p) \parallel P(\mathcal{G}_v) \Big) = \min \sum_{\mathcal{G}_v} Q_\phi(\mathcal{G}_v|\mathbf{Z}_p) \log \frac{Q_\phi(\mathcal{G}_v|\mathbf{Z}_p)}{P(\mathcal{G}_v)}.
\end{equation}
By minimizing $\mathcal{L}_{rec}$, we constrain the estimation of the causal graph. This prevents the model from hallucinating spurious topological connections fitting the environmental noise $E$, thereby furnishing a robust, unconfounded topological foundation for downstream node classification.

\subsubsection{Prediction Loss}
The first term in the variational objective (Eq.~\ref{eq:3}) corresponds to the predictive log-likelihood $\mathbb{E}_{Q_\phi(\mathcal{G}_v|\mathbf{Z}_p)}
	\mathbb{E}_{Q_\phi(\mathbf{Z}_a|\mathbf{Z}_p)}
	\big[\log P_\theta(\mathbf{Y}\mid \mathbf{Z}_p,\mathbf{Z}_a,\mathcal{G}_v)\big].$
In our framework, the conditional topology distribution $Q_\phi(\mathcal{G}_v|\mathbf{Z}_p)$ is modeled through the energy-based graph reconstruction module described above. Moreover, the latent representation $\mathbf{Z}_p$ is jointly regularized by the intra-class aggregation and inter-class separation objectives, which encourage compact class-conditional structure while suppressing inter-class overlap. Under this representation-learning scheme, we instantiate the predictive term using a discriminative approximation parameterized only by $\mathbf{Z}_p$. We then parameterize $P_\theta(\mathbf{Y}\mid \mathbf{Z}_p)$ with a classifier
$h_\theta$,
which maps the learned representation to the class-probability simplex and produces the prediction $\hat{\mathbf{Y}}_v = h_\theta(\mathbf{z}_{p,v}).$ Since maximizing the log-likelihood of a categorical distribution is equivalent to minimizing the cross-entropy loss, the first variational term is implemented as the standard supervised classification objective:
\begin{equation}\label{eq:loss_sup}
	\mathcal{L}_{sup}
	=
	-\frac{1}{|\mathcal{V}_{tr}|}
	\sum_{v\in\mathcal{V}_{tr}}
	\sum_{c=1}^{C}
	\mathbf{Y}_{v,c}\log \hat{\mathbf{Y}}_{v,c},
\end{equation}
where $\mathcal{V}_{tr}$ denotes the set of labeled training nodes, $C$ is the number of classes, and $\mathbf{Y}_{v,c}\in\{0,1\}$ is the one-hot encoded ground-truth label of node $v$. This supervised term preserves the label signal while remaining consistent with the variational objective through a tractable discriminative approximation.

Finally, having formulated the individual variational bounds, we consolidate them to establish the complete empirical training objective. By combining the four aforementioned loss terms, the overall optimization formulation is given as follows:
\begin{equation}
	\mathcal{L}(\theta;\phi)= \mathcal{L}_{sup} + \mathcal{L}_{rec} + \lambda_1 \mathcal{L}_{intra} + \lambda_2 \mathcal{L}_{inter},
\end{equation}
where $\lambda_1$ and $\lambda_2$ are the loss weights.

\subsubsection{Scalable Optimization of Pairwise Regularization}
\label{section4.4.4}

The original formulations of $\mathcal{L}_{intra}$ and $\mathcal{L}_{inter}$ in Section~\ref{section4.4.1} are defined over pairwise node relations and therefore incur quadratic cost under naive full-graph evaluation. To improve scalability on large graphs, we adopt a practical implementation that preserves the optimization role of these objectives while significantly reducing their computational overhead.

For the intra-class objective, instead of enumerating all intra-class node pairs, we approximate class-wise compactness by measuring the deviation of each node representation from a class prototype in the latent space $\mathbf{Z}_a$. Let $\mathbf{z}_{a,i}\in\mathbb{R}^{d}$ denote the normalized intra-class representation of node $i$, and let $y_i\in\mathcal{Y}$ be its class label. We define the normalized prototype of class $c$ as
\begin{equation}
	\bar{\mathbf{z}}_{a,c}
	=
	\frac{\sum_{j \in \mathcal{V},\, y_j = c} \mathbf{z}_{a,j}}
	{\left\|\sum_{j \in \mathcal{V},\, y_j = c} \mathbf{z}_{a,j}\right\|_2},
\end{equation}
\begin{equation}
	\mathcal{L}_{intra}^{\mathrm{sc}}
	=
	\frac{1}{|\mathcal{V}|}
	\sum_{i \in \mathcal{V}}
	\mathcal{D}\!\big(\mathbf{z}_{a,i}, \bar{\mathbf{z}}_{a,y_i}\big),
\end{equation}
which encourages each node to align with the prototype of its own class, and $\mathcal{D}$' per-sample computation remains linear in the representation dimension $d$.

For the inter-class objective, instead of enumerating all inter-class node pairs, we adopt random negative sampling in the latent space $\mathbf{Z}_r$. Specifically, for each node $i$, we sample a fixed number of candidate negatives and retain only those belonging to inter-class, denoted by $\mathcal{S}_i^-$. The scalable inter-class surrogate is defined as
\begin{equation}
	\mathcal{L}_{inter}^{\mathrm{sc}}
	=
	\frac{1}{|N|}
	\sum_{i \in \mathcal{V}}
	\frac{1}{|\mathcal{S}_i^-|}
	\sum_{j \in \mathcal{S}_i^-}
	\max \Big(0, \gamma - \mathcal{D}(\mathbf{z}_{r,i}, \mathbf{z}_{r,j}) \Big),
\end{equation}
where $\gamma>0$ is the margin and $\mathcal{S}_i^-$ denotes the set of sampled inter-class nodes associated with node $i$. This objective penalizes inter-class samples that remain too close in the latent space, thereby suppressing spurious inter-class overlap while avoiding quadratic pairwise computation. The resulting scalable regularizer is
\begin{equation}
	\mathcal{L}(\theta;\phi)
	=
	\mathcal{L}_{sup}+ \mathcal{L}_{rec}+
	\lambda_1 \mathcal{L}_{intra}^{\mathrm{sc}}
	+
	\lambda_2 \mathcal{L}_{inter}^{\mathrm{sc}}.
\end{equation}

Overall, $\mathcal{L}_{intra}^{\mathrm{sc}}$ provides a prototype-based relaxation of class-wise compactness, whereas $\mathcal{L}_{inter}^{\mathrm{sc}}$ provides a sampled relaxation of inter-class separation. From a theoretical perspective, the propositions in Section~\ref{section4.4.1} are established for the original pairwise objectives. The scalable implementation introduced here should therefore be understood as a practical relaxation for efficient optimization on large graphs, rather than a replacement of the underlying theoretical formulation.

\subsection{Discussion}
In this section, we further discuss the proposed framework from two perspectives. We provide a domain-adaptation view to interpret its OOD generalization mechanism, and then present computational complexity analysis of CGRL.
\subsubsection{A Domain-Adaptation Perspective for OOD Generalization.} 
To intuitively understand why our dual-objective framework improves OOD generalization, we can interpret the target risk through the lens of standard domain adaptation theory \cite{ben-david}. Using our decoupled representation space $\mathbf{Z}_p = [\mathbf{Z}_a \parallel \mathbf{Z}_r]$ as an interpretive lens, the standard domain-adaptation bound suggests the following representation-aware decomposition of OOD risk:
\begin{equation}\label{eq:generalization_bound}
	\mathcal{R}_{\mathcal{T}}(f) \leq \hat{\mathcal{R}}_{\mathcal{S}}(f) + \frac{1}{2} \sup_{c \in \mathcal{Y}} d_{\mathcal{H}\Delta\mathcal{H}}(\mathcal{S}_c^{\mathbf{Z}_a}, \mathcal{T}_c^{\mathbf{Z}_a}) + \Omega_{\mathcal{H}}(f, \mathbf{Z}_r) + \lambda + \epsilon,
\end{equation}
where $\mathcal{R}_{\mathcal{T}}(f)$ and $\hat{\mathcal{R}}_{\mathcal{S}}(f)$ denote the expected target risk and the empirical source risk, respectively. Crucially, $\Omega_{\mathcal{H}}(f, \mathbf{Z}_r)$ denotes an abstract non-negative functional capturing the excess target risk induced by the classifier's reliance on spurious features. The terms $\lambda$ and $\epsilon$ represent the ideal joint risk and the hypothesis complexity constant. As in standard domain adaptation theory, this perspective is meaningful when the ideal joint risk $\lambda$ is not dominant. Furthermore, the class-conditional discrepancy term $\sup_{c \in \mathcal{Y}} d_{\mathcal{H}\Delta\mathcal{H}}(\mathcal{S}_c^{\mathbf{Z}_a}, \mathcal{T}_c^{\mathbf{Z}_a})$ should be understood as a task-specific, representation-aware refinement of the standard $\mathcal{H}\Delta\mathcal{H}$-divergence perspective, tailored for node classification.

This conceptual decomposition helps clarify the distinct, yet complementary, roles of our proposed structural objectives. First, the intra-class alignment objective $\mathcal{L}_{intra}$ reduces the class-conditional dispersion in $\mathbf{Z}_a$. This geometric compactness promotes cross-environment stability, thereby empirically restricting the worst-case causal divergence term $\sup_{c \in \mathcal{Y}} d_{\mathcal{H}\Delta\mathcal{H}}(\mathcal{S}_c^{\mathbf{Z}_a}, \mathcal{T}_c^{\mathbf{Z}_a})$. Second, addressing the inherently unbounded environmental shifts in $\mathbf{Z}_r$, the inter-class separation objective $\mathcal{L}_{inter}$ suppresses the inter-class topological overlap. Conceptually related to the label-predictive capacity of $\mathbf{Z}_r$, this suppression provides a regularization signal that functionally discourages the classifier from extracting shared environmental noise. In practice, this explicitly mitigates the spurious sensitivity penalty $\Omega_{\mathcal{H}}(f, \mathbf{Z}_r)$.

Together, this domain-adaptation-inspired view is intended as an interpretive framework rather than a tight, directly estimable bound. Under the usual adaptation assumption that the ideal joint risk $\lambda$ is not dominant, rather than attempting the impossible task of ensuring the spurious features $\mathbf{Z}_r$ remain invariant across domains, our two structural objectives help explain the observed OOD gains by promoting cross-environment stability in $\mathbf{Z}_a$ and systematically reducing predictive reliance on spurious components in $\mathbf{Z}_r$.

While recent graph OOD methods typically motivate generalization through cross-environment learning or SCM \cite{CIA-LRA, CaNet}, our framework admits a complementary interpretation from the perspective of domain adaptation. Rather than depending on environment annotations or strong invariance assumptions, it promotes OOD robustness by reducing the class-conditional causal discrepancy $\sup_{c \in \mathcal{Y}} d_{\mathcal{H}\Delta\mathcal{H}}$ via intra-class aggregation and mitigating spurious predictive sensitivity $\Omega_{\mathcal{H}}$ via inter-class separation. This provides a tractable, representation-centric explanation for generalization under distribution shifts.

\subsubsection{Time Complexity Analysis}\label{sec:time_complexity}
We analyze the training complexity of the proposed framework for node classification on a graph with $|\mathcal{V}|$ nodes and $|\mathcal{E}|$ edges. Let $L$ denote the number of GNN layers, $d$ the hidden dimension, $K$ the number of branches, and $S$ the number of sampled negatives per node. For the multi-branch causal encoding module, each branch performs feature transformation and message passing over the input graph, leading to a complexity of $\mathcal{O}\!\left(|\mathcal{V}|d^2 + |\mathcal{E}|d\right)$. For the energy-based reconstruction module, computing edge compatibility scores and the associated latent sampling incurs
$
\mathcal{O}\!\left(|\mathcal{E}|d^2 + K|\mathcal{V}|d\right)$. For the original pairwise regularizers, naive full-graph evaluation requires
$
\mathcal{O}\!\left(
\sum_{c\in\mathcal{Y}}|\mathcal{V}_c|^2 d
+
\sum_{c\neq c'}|\mathcal{V}_c||\mathcal{V}_{c'}|d
\right)$,
which is upper bounded by $\mathcal{O}(|\mathcal{V}|^2 d)$. Therefore, the overall naive training complexity is
$
\mathcal{O}\!\left(|\mathcal{V}|d^2 + |\mathcal{E}|d
+ |\mathcal{E}|d^2
+ |\mathcal{V}|^2 d
\right)$.

In practice, for large-scale graphs we adopt the scalable implementation described above. Under this implementation, the intra-class prototype computation requires $\mathcal{O}(|\mathcal{V}|d)$, while the sampled inter-class regularization requires $\mathcal{O}(|\mathcal{V}|Sd)$. The resulting practical complexity becomes
$
\mathcal{O}\!\left(|\mathcal{V}|d^2 + |\mathcal{E}|d
+ |\mathcal{E}|d^2
+ |\mathcal{V}|d
+ |\mathcal{V}|Sd
\right)$.
When $L$, $K$, $d$, and $S$ are treated as constants, the graph-dependent computation is effectively near-linear in graph size.

\section{EXPERIMENTS}

In this section, to verify the generalization performance of our proposed CGRL framework, we conduct experiments on $8$ open-source datasets with public split 






\begin{table}[h]
\caption{Datasets with feature and spatiotemporal shifts.}
\label{table4}
\centering
\begin{tabular}{cccccc}
\toprule
Datasets & Nodes & Edges & Classes&Features&Shift types\\
\midrule
Cora    &2708 &5429 &7 &1433 &Feature\\
Citeseer &3327 &4732 &6 &3703 &Feature\\
Pubmed &19717 &44338 &3 &500 &Feature\\
Arxiv &169343 &1166243 &40 &128 &Temporal\\
Twitch &34120 &892346 &2 &2545 &Spatial\\
\bottomrule
\end{tabular}
\end{table}

\subsection{Experimental Settings}
\textbf{Datasets \& Metrics.} Accuracy is employed for Cora, Citeseer, Pubmed \cite{synthetic}, Arxiv \cite{NEURIPS2020_fb60d411}, GOODCora, GOODCBAS and GOODWebKB \cite{gui2022good}. ROC-AUC is used for Twitch \cite{Twitch}.
\begin{itemize}
\item \textbf{datasets with feature and spatiotemporal shifts.} All in-distribution (ID) data are randomly divided with a 50\%/25\%/25\% ratio for training, validation, and test in this datasets. The overall introduction of datasets with feature and spatiotemporal shifts is showed in Table \ref{table4}.
	\begin{itemize}
	\item \textbf{Cora, Citeseer and Pubmed.} Cora, Citeseer, and Pubmed are all well-established homogeneous citation network datasets. Specifically, the first two datasets focus on the computer science domain, while the last one targets biomedical research. In these datasets, nodes correspond to academic papers, and edges denote citation relationships between them. Node features are derived from the bag-of-words model applied to the textual content of the papers. The task on these three datasets is to classify the research topics of papers based on both node features and graph topological structures. We follow the domain partitioning protocol of CaNet \cite{CaNet}. We generate synthetic representations for original nodes to simulate feature distribution shifts while preserving the ground-truth node labels. These representations are further partitioned into six domains, among which the last three are designated as OOD data. This experimental setup is designed to evaluate the robustness of models under feature shifts.
	
	\item \textbf{Arxiv.} Arxiv dataset is a temporally aware citation network of papers in the computer science domain, where nodes represent academic papers and edges denote citation relationships. Its task is to predict the research topic categories of papers, which naturally captures temporal distribution shifts. Compared with classic small-scale datasets such as Cora, it exhibits substantial improvements in scale, complexity, and realism, making it a pivotal benchmark for evaluating the scalability, temporal generalization capability, and OOD generalization capability of GNNs. We define papers published before 2014 as ID data, while those published after 2014 are designated as OOD data. Furthermore, the OOD data are partitioned into three subsets, namely 2014-2015, 2016-2017, and 2018-2020 (with both upper and lower bounds included) \cite{CaNet}.
	
	\item\textbf{Twitch.} Twitch dataset is a well-recognized multilingual social network benchmark, where nodes represent users on the Twitch platform and edges denote mutual follow relationships between users. It consists of 6 cross-lingual subgraphs (DE, EN, ES, FR, PT, RU), which share the same feature space but exhibit distinct distributional characteristics. This renders the dataset well-suited for research on cross-domain transfer and OOD generalization. We designate the subgraphs corresponding to DE, PT, and RU as ID data, while those corresponding to ES, FR, and EN are defined as OOD data \cite{CaNet}.
	\end{itemize}

\item \textbf{Datasets with covariate and concept shifts.}
GOODCora, GOODCBAS, and GOODWebKB are three representative datasets from the Graph Out-of-Distribution (GOOD) benchmark \cite{gui2022good}, which was proposed to address the long-standing issues of inconsistent evaluation protocols and poorly controlled shift types in graph OOD generalization. All three datasets are adapted from publicly available graph benchmarks and retain the original node classification task, while introducing controlled distribution shifts for systematic evaluation. Following \cite{gui2022good}, these datasets cover two major shift categories, namely covariate shifts and concept shifts, and thus provide standardized testbeds for evaluating the robustness of causal GNNs and graph deconfounding methods. Specifically, each dataset is further partitioned into multiple domains according to the predefined shift criterion. For GOODCora, the domains are constructed based on node degrees and word distributions, where the latter is determined by the occurrence frequency of selected vocabulary in publication texts. For GOODCBAS, the domains are defined by color attributes, and for GOODWebKB, they are defined according to university identities. Such a domain construction protocol enables controlled OOD evaluation under diverse graph shifts while preserving the semantic structure of the original node classification task.
\end{itemize}
To quantify the \emph{Info-Jitter} phenomenon, we monitor the mutual information (MI) between predictive representations and ground-truth labels throughout training. At epoch $t$, let $\mathbf{Z}_{p}^{(t)}=\{\mathbf{z}_{p,v}^{(t)}\}_{v\in\mathcal{V}}
$ denote the predictive representations of all nodes. We compute the MI value at epoch $t$ as
$\mathrm{MI}_t = I\!\left(\mathbf{Z}_{p}^{(t)};\mathbf{Y}\right),
$ where $I(\cdot;\cdot)$ denotes the MI estimator applied consistently across all methods and datasets. In other words, the MI trajectory is measured using the predictive representations of all nodes and their labels at each training epoch. While the absolute MI value reflects how much class-relevant information is preserved in the learned representation, our main interest is its temporal stability under OOD shifts. To this end, we focus on the second half of training and define two complementary metrics. First, we define the \emph{Jitter} score as the cumulative absolute variation of the MI trajectory:
\begin{equation}
	\mathrm{Jitter}
	=
	\sum_{t=t_0+1}^{T}
	\left|
	\mathrm{MI}_t-\mathrm{MI}_{t-1}
	\right|,
\end{equation}
where $T$ is the total number of training epochs and $t_0$ denotes the starting epoch of the training (e.g., $t_0=100$). A smaller Jitter score indicates weaker cumulative fluctuation of the MI trajectory and therefore a weaker \emph{Info-Jitter} phenomenon. Second, to evaluate whether the model preserves useful label-relevant information in the stable training phase, we define the \emph{TailMI} score as the average MI over the same interval:
\begin{equation}
	\mathrm{TailMI}
	=
	\frac{1}{T-t_0+1}
	\sum_{t=t_0}^{T}
	\mathrm{MI}_t.
\end{equation}
A larger TailMI score indicates that the model retains stronger class-relevant information in the predictive representation during the later stage of optimization. Together, Jitter and TailMI provide a complementary characterization of the training dynamics: Jitter measures the stability of representation--label alignment, whereas TailMI measures its information content. In the experiments, we report both metrics to assess whether different methods can alleviate the \emph{Info-Jitter} phenomenon while maintaining discriminative predictive representations.

\textbf{Baselines.} To ensure a fair and comprehensive comparison, we instantiate all competing methods on two widely used GNN backbones, namely GCN and GAT. 
\begin{itemize}
	\item \textbf{ERM} learns by minimizing the average empirical risk under the standard i.i.d. assumption, and serves as the conventional training baseline.
	
	\item \textbf{IRM} \cite{IRM} seeks invariant predictors by encouraging the model to rely on features whose predictive effect remains stable across environments.
	
	\item \textbf{DeepCoral} \cite{deepcoral} reduces domain discrepancy by aligning second-order feature statistics between different domains.
	
	\item \textbf{DANN} \cite{DANN} performs adversarial domain adaptation to learn domain-invariant representations shared by source and target distributions.
	
	\item \textbf{GroupDRO} \cite{groupdro} improves worst-group robustness by reweighting the training objective toward groups with larger losses.
	
	\item \textbf{Mixup} \cite{mixup} regularizes the model through interpolation in the feature-label space, and is widely used to improve generalization.
	
	\item \textbf{SRGNN} \cite{zhu2021shiftrobust} is a shift-robust graph neural network designed to mitigate training-test distribution inconsistency in graph data.
	
	\item \textbf{EERM} \cite{wu2022handling} extends invariant learning to graphs by constructing virtual environments from a single observed graph.
	
	\item \textbf{CaNet} \cite{CaNet} introduces a causal node-level framework that attributes OOD degradation to latent confounding bias and infers pseudo-environments without explicit environment labels.
	
	\item \textbf{CIA-LRA} \cite{CIA-LRA} analyzes why classical invariance methods are less effective on graphs and improves robustness by selectively aligning invariant node representations through adjacent-label distributions.
\end{itemize}
\textbf{Implement Details.}
All experiments are conducted on a server running Ubuntu 22.04 with an Intel(R) Xeon(R) Platinum 8470Q CPU (52 Cores @ 2.10GHz), 48 GB of RAM, and a RTX 5090 GPU (32 GB of RAM in total). Furthermore, CGRL is implemented using PyTorch 2.5.1, Python 3.12, and PyTorch Geometric 2.6.1. During training, Adam with weight decay and dropout are introduced to prevent overfitting and enhance the model's generalization performance. For datasets with feature and spatiotemporal shifts, all experiments are conducted over five independent runs, with 500 epochs per run for each dataset. For the large-scale datasets Arxiv and Pubmed, we use the scalable implementation of the pairwise regularizers described in Section \ref{section4.4.4}, where intra-class compactness is approximated by class prototypes and inter-class separation is optimized via random negative sampling. For the remaining datasets, we use the original pairwise implementation. This implementation choice only affects the computation of the regularization terms and does not modify the model architecture or the theoretical formulation. The parameters of CGRL are reinitialized prior to each run, and the best performance on the validation set is recorded upon completion of each run. The final results are obtained by averaging the performance metrics across the five runs with different random seeds. For datasets with covariate and concept shifts, the protocol for selecting results is identical to that used for datasets with feature and spatiotemporal shifts. The maximum number of epochs is set as specified by the GOOD benchmark. Table \ref{para} presents the detailed experimental parameters.

\begin{table}[!ht]
	\centering
	\caption{Parameter settings of CGRL}
	\label{para}
	\resizebox{\textwidth}{!}{
		\begin{tabular}{cccccccc}
			\toprule
			Shifts Type&Datasets& Weight decay & Dropout & Learning rate&  Branches & Hidden dimension & Layer of model  \\ \midrule
			\multirow{3}{*}{Feature} & Cora& 1e-5 & 0.4 & 0.001 & 3 &128 & 4 \\
			& Citeseer& 5e-5 & 0.4 & 0.001& 3& 128 &3 \\
			& Pumbed& 5e-5 &0.3 & 0.01& 2&128 & 2 \\ \midrule
			\multirow{2}{*}{Sptiotemporal}& Arxiv& 5e-5 &0.2 & 0.01& 2 &32 & 2 \\
			& Twitch& 3e-5 &0.1 & 0.01& 4&64 & 5 \\ \midrule
			\multirow{3}{*}{Covariate} & GOODCora & 1e-3& 0 & 1e-3& 3& 256& 4 \\
			& GOODCBAS & 1e-3& 0& 1e-3& 4& 256& 4 \\
			& GOODWebKB & 1e-3& 0.1& 3e-3& 3& 128& 5 \\ \midrule
			\multirow{3}{*}{Concept} & GOODCora& 3e-3& 0& 1e-3& 4& 256& 4 \\
			& GOODCBAS & 1e-3& 0& 3e-3& 4& 256& 4 \\
			& GOODWebKB & 1e-3& 0.1& 5e-3& 4& 128& 5 \\
			\bottomrule
			
	\end{tabular}}
\end{table}

\begin{table*}[htbp]

    \centering
    \caption{Test (mean±standard deviation) Accuracy (\%)  and ROC-AUC (\%) on OOD data of Arxiv and Twitch datasets, respectively. OOM denotes out-of-memory. \textbf{Bold} numbers indicate the best result, and the \underline{underlined} numbers indicate the second-best result.}
    \label{table1}
\resizebox{\textwidth}{!}{
\begin{tabular}{ccccccccccc}

\toprule[1pt]

\multirow{2}{*}{\textbf{Backbone}} & \multirow{2}{*}{\textbf{Method}} & \multirow{2}{*}{\textbf{Cora}} & \multirow{2}{*}{\textbf{Citeseer}}&\multirow{2}{*}{\textbf{Pubmed}}&\multicolumn{3}{c}{\textbf{Arxiv}}&\multicolumn{3}{c}{\textbf{Twitch}}\\
\cmidrule(lr){6-8} \cmidrule(lr){9-11}
&&& && \textbf{2014-2015} & \textbf{2016-2017}&\textbf{2018-2020} & \textbf{ES} & \textbf{FR} & \textbf{EN}\\ \midrule
\multirow{11}{*}{GCN} & ERM &74.30\footnotesize±2.66&74.93\footnotesize±2.39&81.36\footnotesize±1.78& 56.33\footnotesize±0.17 & 53.53\footnotesize±0.44 & 45.83\footnotesize±0.47 & 66.07\footnotesize±0.14 & 52.62\footnotesize±0.01 & 63.15\footnotesize±0.08 \\
 & IRM &74.19\footnotesize±2.60&75.34\footnotesize±1.61&81.14\footnotesize±1.72& 55.92\footnotesize±0.24 & 53.25\footnotesize±0.49 & 45.66\footnotesize±0.83 & 66.95\footnotesize±0.27 & 52.53\footnotesize±0.02 & 62.91\footnotesize±0.08 \\
 & Coral &74.26\footnotesize±2.28&74.97\footnotesize±2.53&81.56\footnotesize±2.35& 56.42\footnotesize±0.26 & 53.53\footnotesize±0.54 & 45.92\footnotesize±0.52 & 66.15\footnotesize±0.14 & 52.67\footnotesize±0.02 & 63.18\footnotesize±0.03 \\
 & DANN &73.09\footnotesize±3.25&74.74\footnotesize±2.78&80.77\footnotesize±1.43& 56.35\footnotesize±0.11 & 53.81\footnotesize±0.33 & 45.89\footnotesize±0.37 & 66.15\footnotesize±0.13 & 52.66\footnotesize±0.02 & 63.20\footnotesize±0.06 \\
 & GroupDRO & 74.25\footnotesize±2.61&75.02\footnotesize±2.05&81.07\footnotesize±1.89& 56.52\footnotesize±0.27 & 53.40\footnotesize±0.29 & 45.76\footnotesize±0.59 & 66.82\footnotesize±0.26 & 52.69\footnotesize±0.02 & 62.95\footnotesize±0.11 \\
 & Mixup &92.77\footnotesize±1.27&77.28\footnotesize±5.28&79.76\footnotesize±4.44& 56.67\footnotesize±0.46 & 54.02\footnotesize±0.51 & 46.09\footnotesize±0.58 & 65.76\footnotesize±0.30 & 52.78\footnotesize±0.04 & 63.15\footnotesize±0.08 \\
 & SRGNN &81.91\footnotesize±2.64&76.10\footnotesize±4.04&84.75\footnotesize±2.38& 56.79\footnotesize±1.35 & 54.33\footnotesize±1.78 & 46.24\footnotesize±1.90 & 65.83\footnotesize±0.45 & 52.47\footnotesize±0.06 & 62.74\footnotesize±0.23 \\
 & EERM &83.00\footnotesize±0.77&74.76\footnotesize±1.15&OOM& OOM & OOM & OOM & 67.50\footnotesize±0.74 & 51.88\footnotesize±0.07 & 62.56\footnotesize±0.02 \\
 & CaNet &96.12\footnotesize±1.04&\underline{94.57\footnotesize±1.92}&88.82\footnotesize±2.30& \underline{59.01\footnotesize±0.30} & \underline{56.88\footnotesize±0.70} & \underline{56.27\footnotesize±1.21} & \underline{67.47\footnotesize±0.32} & \underline{53.59\footnotesize±0.19} & \underline{64.24\footnotesize±0.18} \\ 
  & CIA-LRA &\underline{97.43\footnotesize±0.28}
&94.22\footnotesize±0.31
&\underline{89.97\footnotesize±0.17}& OOM & OOM & OOM & 67.19\footnotesize±0.43 & 53.08\footnotesize±0.92 & 64.03\footnotesize±0.32
\\
   & \textbf{CGRL)} &\textbf{99.14\footnotesize±0.16}
& \textbf{97.96\footnotesize±0.23}
&\textbf{96.90\footnotesize±0.44}
& \textbf{61.43\footnotesize±0.32
} & \textbf{60.12\footnotesize±0.56
} & \textbf{59.45\footnotesize±0.82
} & \textbf{68.02\footnotesize±0.29} & \textbf{54.98\footnotesize±0.23} & \textbf{64.89\footnotesize±0.12} \\
\midrule
\multirow{11}{*}{GAT} & ERM &91.10\footnotesize±2.26&82.60\footnotesize±0.51&84.80\footnotesize±1.47& 57.15\footnotesize±0.25 & 55.07\footnotesize±0.58 & 46.22\footnotesize±0.82 & 65.67\footnotesize±0.02 & 52.00\footnotesize±0.10 & 61.85\footnotesize±0.05 \\
 & IRM &91.63\footnotesize±1.27&82.73\footnotesize±0.37&84.95\footnotesize±1.06& 56.55\footnotesize±0.18 & 54.53\footnotesize±0.32 & 46.01\footnotesize±0.33 & 67.27\footnotesize±0.19 & 52.85\footnotesize±0.15 & 62.40\footnotesize±0.24 \\
 & Coral &91.82\footnotesize±1.30&82.44\footnotesize±0.58&85.07\footnotesize±0.95& 57.40\footnotesize±0.51 & 55.14\footnotesize±0.71 & 46.71\footnotesize±0.61 & 67.12\footnotesize±0.03 & 52.61\footnotesize±0.01 & 63.41\footnotesize±0.01 \\
 & DANN &92.40\footnotesize±2.05&82.49\footnotesize±0.67&83.94\footnotesize±0.84& 57.23\footnotesize±0.18 & 55.13\footnotesize±0.46 & 46.61\footnotesize±0.57 & 66.59\footnotesize±0.38 & 52.88\footnotesize±0.12 & 62.47\footnotesize±0.32 \\
 & GroupDRO &90.54\footnotesize±0.94& 82.64\footnotesize±0.61&85.17\footnotesize±0.86& 56.69\footnotesize±0.27 & 54.51\footnotesize±0.49 & 46.00\footnotesize±0.59 & 67.41\footnotesize±0.04 & 52.99\footnotesize±0.08 & 62.29\footnotesize±0.03 \\
 & Mixup &92.94\footnotesize±1.21&82.77\footnotesize±0.30&81.58\footnotesize±0.65& 57.17\footnotesize±0.33 & 55.33\footnotesize±0.37 & 47.17\footnotesize±0.84 & 65.58\footnotesize±0.13 & 52.04\footnotesize±0.04 & 61.75\footnotesize±0.13 \\
 & SRGNN &91.77\footnotesize±2.43&82.72\footnotesize±0.35&83.40\footnotesize±0.67& 56.69\footnotesize±0.38 & 55.01\footnotesize±0.55 & 46.88\footnotesize±0.58 & 66.17\footnotesize±0.03 & 52.84\footnotesize±0.04 & 62.07\footnotesize±0.04 \\
 & EERM &91.80\footnotesize±0.73&74.07\footnotesize±0.75&OOM& OOM & OOM & OOM & 66.80\footnotesize±0.46 & 52.39\footnotesize±0.20 & 62.07\footnotesize±0.68 \\
 & CaNet &97.30\footnotesize±0.25&95.33\footnotesize±0.33&89.89\footnotesize±1.92& \underline{60.44\footnotesize±0.27} & \underline{58.54\footnotesize±0.72} & \underline{59.61\footnotesize±0.28} & \underline{68.08\footnotesize±0.19} & \underline{53.49\footnotesize±0.14} & \underline{63.76\footnotesize±0.17} \\  
 & CIA-LRA &\underline{97.89\footnotesize±0.34}
&\underline{95.47\footnotesize±0.09}&\underline{92.12\footnotesize±0.22}
& OOM & OOM & OOM & 67.76\footnotesize±0.29
 & 53.26\footnotesize±0.37 & 63.68\footnotesize±0.25\\

& \textbf{CGRL} &\textbf{98.56\footnotesize±0.17}
& \textbf{97.74\footnotesize±0.16}
&\textbf{96.83\footnotesize±0.21}
& \textbf{61.57\footnotesize±0.32} & \textbf{60.40\footnotesize±0.48} & \textbf{60.78\footnotesize±0.41}& \textbf{68.42\footnotesize±0.25}  & \textbf{54.51\footnotesize±0.17} & \textbf{64.23\footnotesize±0.21} \\
\bottomrule[1pt]
\end{tabular}
}
\label{tab:performance_comparison}

\end{table*}
\subsection{Overall Performance Comparison}

Tables~\ref{table1} and \ref{table2} report the OOD generalization results of CGRL and the competing baselines under feature, spatiotemporal, covariate, and concept shifts. Overall, CGRL exhibits strong consistency across both GCN and GAT backbones. It achieves the best performance in all settings of Table~\ref{table1}, and remains the top-performing method in nearly all GOOD benchmark settings of Table~\ref{table2}, with only one second-best case. These results show that the proposed framework generalizes robustly across diverse graph OOD scenarios rather than benefiting from a particular backbone or shift type.

\subsubsection{Generalization on Feature and Spatiotemporal Shifts}

As shown in Table \ref{table1}, CGRL consistently outperforms all baselines on the three datasets with feature shifts. The gains are particularly notable on Pubmed, where CGRL improves substantially over the strongest competing methods under both backbones. This suggests that the proposed causal-guided disentanglement is especially effective when feature shifts significantly distort the latent geometry of node representations. On Cora and Citeseer, CGRL also maintains clear and stable advantages, further verifying its effectiveness on small- and medium-scale citation graphs.

The advantage of CGRL further extends to spatiotemporal shifts. On Arxiv, it achieves the best performance on all temporal splits with both GCN and GAT, and on Twitch it again ranks first across all target domains. Notably, several strong baselines, such as EERM and CIA-LRA, encounter out-of-memory failures on Pubmed and Arxiv, whereas CGRL remains trainable and still achieves the best results. This improvement is not only due to better representation learning, but also to the scalable optimization introduced in Section \ref{section4.4.4}. In particular, for the large-scale datasets Pubmed and Arxiv, we use the prototype-and-sampling implementation of the pairwise regularizers, which preserves the optimization role of $\mathcal{L}^{sc}_{intra}$ and $\mathcal{L}^{sc}_{inter}$ while substantially reducing their computational cost. As a result, CGRL attains a favorable balance between robustness and scalability on large graphs.

\subsubsection{Generalization on Covariate and Concept Shifts}

Table \ref{table2} further evaluates the methods on the GOOD benchmark under controlled covariate and concept shifts. Under covariate shifts, CGRL achieves the best results on all datasets and under both backbones, with particularly clear improvements on GOODCora and GOODWebKB. This indicates that the proposed framework effectively alleviates the mismatch between training and test input distributions by promoting stronger intra-class compactness and weaker inter-class contamination. Under concept shifts, CGRL remains highly competitive and achieves the best performance in almost all settings, except for GOODCBAS with the GCN backbone, where it ranks second. Even in this case, its performance remains close to the best result. Overall, these results show that CGRL is robust not only to input-level shifts, but also to changes in the input--label relationship, confirming its effectiveness across multiple OOD regimes.

\begin{table*}[htbp]

    \centering
    \caption{Test (mean±standard deviation) Accuracy (\%) on OOD data of GOODCora, GOODCBAS and GOOODWebKB datasets under the different shifts, respectively. OOM denotes out-of-memory. \textbf{Bold} numbers indicate the best result, and the \underline{underlined} numbers indicate the second-best result.}
    \label{table2}
\resizebox{\textwidth}{!}{
\begin{tabular}{cccccccccc}

\toprule[1pt]
\multirow{3}{*}{\textbf{Backbone}} & \multirow{3}{*}{\textbf{Method}} & \multicolumn{4}{c}{\textbf{Covariate}} & \multicolumn{4}{c}{\textbf{Concept}}\\
\cmidrule(lr){3-6} \cmidrule(lr){7-10}
&& \multicolumn{2}{c}{\textbf{GOODCora}} & \textbf{GOODCBAS}&\textbf{GOODWebKB} & \multicolumn{2}{c}{\textbf{GOODCora}} & \textbf{GOODCBAS}&\textbf{GOODWebKB}\\
\cmidrule(lr){3-4} \cmidrule(lr){7-8}
&& \textbf{degree}&\textbf{word}&\textbf{color}&\textbf{university}& \textbf{degree}&\textbf{word}&\textbf{color}&\textbf{university}\\
\midrule
\multirow{11}{*}{GCN} & ERM & 55.78\footnotesize±0.52 & 64.76\footnotesize±0.30 & 78.57\footnotesize±2.02 & 16.14\footnotesize±1.35 & 60.24
\footnotesize±0.40 & 64.32
\footnotesize±0.15&82.14\footnotesize±1.17&27.52\footnotesize±0.75 \\
 & IRM & 55.77\footnotesize±0.66 & 64.81\footnotesize±0.33 & 78.57\footnotesize±1.17 & 13.75\footnotesize±4.91 & 61.23\footnotesize±0.08 & 64.42\footnotesize±0.18&81.67\footnotesize±0.89&27.52\footnotesize±0.75 \\
 & Coral & 56.03\footnotesize±0.37 & 64.75\footnotesize±0.26 & 78.09\footnotesize±0.67 & 11.90\footnotesize±1.72 & 60.41\footnotesize±0.27 & 64.34\footnotesize±0.17&82.86\footnotesize±0.58&26.61\footnotesize±0.75 \\
 & DANN & 56.10\footnotesize±0.59 & 64.77\footnotesize±0.42 & 77.57
\footnotesize±2.86 & 15.08\footnotesize±0.37 & 60.78\footnotesize±0.38 & 64.51
\footnotesize±0.19&82.50\footnotesize±0.72&26.91\footnotesize±0.36 \\
 & GroupDRO & 55.64\footnotesize±0.50 & 64.62
\footnotesize±0.30 & 79.52\footnotesize±0.67 & 14.29\footnotesize±2.59 & 60.59
\footnotesize±0.36 & 64.34\footnotesize±0.25&82.38\footnotesize±0.67&28.44\footnotesize±0.01 \\
 & Mixup & 57.89\footnotesize±0.27 & 65.07\footnotesize±0.22&70.00\footnotesize±5.34& 16.67\footnotesize±1.12 & 63.65\footnotesize±0.37 & 64.45\footnotesize±0.12 & 65.48\footnotesize±0.67&30.28\footnotesize±1.50
 \\
 & SRGNN & 57.13\footnotesize±0.25 & 64.50\footnotesize±0.35 & 73.81\footnotesize±4.71 & 16.40\footnotesize±1.63 & 61.21
\footnotesize±0.29 & 64.53\footnotesize±0.27&80.95\footnotesize±0.67&27.52\footnotesize±0.75
 \\
 & EERM & 56.88\footnotesize±0.32& 61.98\footnotesize±0.1& 40.48\footnotesize±9.87
 & 16.21\footnotesize±5.67 & 58.38\footnotesize±0.04 & 63.09\footnotesize±0.36&61.43\footnotesize±1.17& 28.04\footnotesize±11.67\\
 & CaNet & 57.35\footnotesize±0.04 & 64.66\footnotesize±0.36 & 80.95
\footnotesize±0.67 & 15.61\footnotesize±5.51 & 60.34
\footnotesize±0.20 &64.65\footnotesize±0.39&83.24\footnotesize±3.32
&26.30\footnotesize±0.43\\ 
  & CIA-LAR & \underline{58.40\footnotesize±0.59} & \underline{65.95\footnotesize±0.04} & \underline{82.86\footnotesize±1.17} & \underline{19.84\footnotesize±2.83} & \underline{63.71
\footnotesize±0.32} &\underline{65.07\footnotesize±0.21}&\textbf{94.53\footnotesize±0.33}
&\underline{36.70\footnotesize±0.75}\\
   & \textbf{CGRL} & \textbf{59.68\footnotesize±0.47} & \textbf{69.01\footnotesize±0.21} & \textbf{83.14\footnotesize±1.51} & \textbf{23.02\footnotesize±2.37} & \textbf{65.51\footnotesize±0.23} & \textbf{68.15\footnotesize±0.28}&\underline{85.29\footnotesize±1.13}&\textbf{39.27\footnotesize±1.02}\\
\midrule
\multirow{11}{*}{GAT} & ERM & 55.30\footnotesize±0.34 & 63.75\footnotesize±0.39 & 65.24\footnotesize±2.69 & 31.48\footnotesize±6.12 & 61.36\footnotesize±0.38& 64.38\footnotesize±0.13&74.52\footnotesize±1.87&25.69\footnotesize±1.98
 \\
 & IRM & 55.07\footnotesize±0.30 & 63.75\footnotesize±0.26 & 65.24
\footnotesize±1.78 & 30.69\footnotesize±8.63&61.42\footnotesize±0.34 & 64.31\footnotesize±0.39 & 73.33\footnotesize±0.89&25.69\footnotesize±1.98
 \\
 & Coral & 55.35\footnotesize±0.40 & 63.96\footnotesize±0.07 & 64.76
\footnotesize±2.43 & 33.86\footnotesize±6.93 & 61.62\footnotesize±0.26 & 64.26\footnotesize±0.28&75.95\footnotesize±3.21& 30.27\footnotesize±3.26
\\
 & DANN & 55.31\footnotesize±0.57 & 64.21\footnotesize±0.12 & 65.44\footnotesize±3.18 & 32.43\footnotesize±2.09 & 61.59\footnotesize±0.31& 64.41\footnotesize±±0.24&74.04\footnotesize±2.38&26.87\footnotesize±2.07
 \\
 & GroupDRO & 55.03\footnotesize±0.45 & 63.82\footnotesize±0.06 & 67.62\footnotesize±2.43 & 31.75\footnotesize±2.82 &61.31\footnotesize±0.20 & 64.07\footnotesize±0.25&73.81\footnotesize±1.78&26.91\footnotesize±2.16
 \\
 & Mixup & 56.77\footnotesize±0.36 & 65.70\footnotesize±0.28 &63.33
\footnotesize±8.60 & 20.37\footnotesize±11.38 & 63.97\footnotesize±0.18& 65.42\footnotesize±0.32&73.33\footnotesize±1.47&\underline{38.53\footnotesize±0.75}
 \\
 & SRGNN & 55.87\footnotesize±0.32 & 64.50\footnotesize±0.35 & 68.09\footnotesize±0.67 & 28.84\footnotesize±1.35 & 61.21\footnotesize±0.29 & 64.10\footnotesize±±0.28&72.38\footnotesize±1.22&23.55\footnotesize±1.56
 \\
 & EERM & 46.63\footnotesize±1.75& 62.57\footnotesize±0.50& 60.47\footnotesize±4.10& 33.33\footnotesize±14.60 & 48.05\footnotesize±2.03 & 53.02\footnotesize±1.23&60.95\footnotesize±3.56& 25.38\footnotesize±4.26
\\
 & CaNet & 55.35\footnotesize±0.14 & 62.76\footnotesize±0.25 & 68.09\footnotesize±1.78 & 23.87\footnotesize±15.16 & 60.97\footnotesize±0.07& 63.73\footnotesize±0.44&75.95\footnotesize±3.41& 24.77\footnotesize±3.97
\\  
 & CIA-LRA & \underline{57.95\footnotesize±0.13} & \underline{68.59\footnotesize±0.26} & \underline{75.24\footnotesize±1.78} & \underline{38.62\footnotesize±3.57} & \underline{67.08\footnotesize±0.26} & \underline{68.05\footnotesize±0.14}&\underline{78.34\footnotesize±3.51}&31.80\footnotesize±1.88
\\

& \textbf{CGRL} & \textbf{60.05\footnotesize±0.14} & \textbf{69.37\footnotesize±0.17} & \textbf{77.56\footnotesize±2.59} & \textbf{42.33\footnotesize±3.28} & \textbf{68.67\footnotesize±0.43} & \textbf{70.84\footnotesize±0.45} &\textbf{83.57\footnotesize±1.01}& \textbf{39.45\footnotesize±1.54}
\\
\bottomrule[1pt]
\end{tabular}
}
\label{tab:performance_comparison}

\end{table*}
\subsection{Alleviating the Info-Jitter Phenomenon (RQ2)}

To evaluate whether CGRL can alleviate the \emph{Info-Jitter} phenomenon, we monitor the mutual information (MI) between the predictive representations of all nodes and their ground-truth labels throughout training. Following the protocol in Section~5.1, we further summarize the MI dynamics using two complementary metrics: \emph{Jitter}, which measures the cumulative fluctuation of the MI trajectory, and \emph{TailMI}, which measures the average MI over epochs 100--500. All quantitative results in Table~\ref{tab:mi_fluctuation} are reported as mean$\pm$standard deviation over five random seeds. Since a low Jitter alone may also arise from uniformly weak representations, Jitter should be interpreted jointly with TailMI: a desirable model should simultaneously exhibit reduced fluctuation and retain label-relevant information. 

\begin{figure}[t]
\begin{center}
\centerline{\includegraphics[width=1\columnwidth]{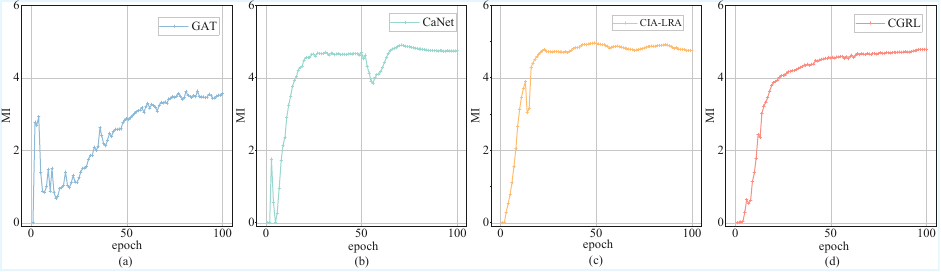}}
\caption{Mutual information between predictive representations and ground-truth labels on the GOODCora dataset with degree-domain covariate shifts. Subfigures (a)–(d) correspond to GAT-based ERM, CANet, CIA-LRA, and CGRL, respectively.}
\label{figure4}
\end{center}
\end{figure}
\begin{figure}[t]
	\begin{center}
		\centerline{\includegraphics[width=1\columnwidth]{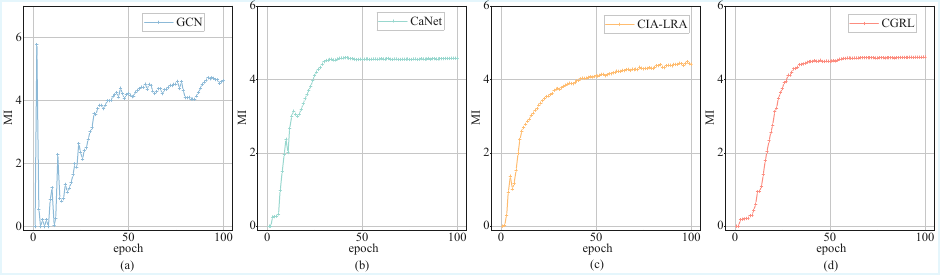}}
		\caption{Mutual information between predictive representations and ground-truth labels on the GOODCora dataset with degree-domain concept shifts. Subfigures (a)–(d) correspond to GCN-based ERM, CANet, CIA-LRA, and CGRL, respectively.}
		\label{figure5}
	\end{center}
\end{figure}
\begin{figure}[h]
	\begin{center}
		\centerline{\includegraphics[width=1\columnwidth]{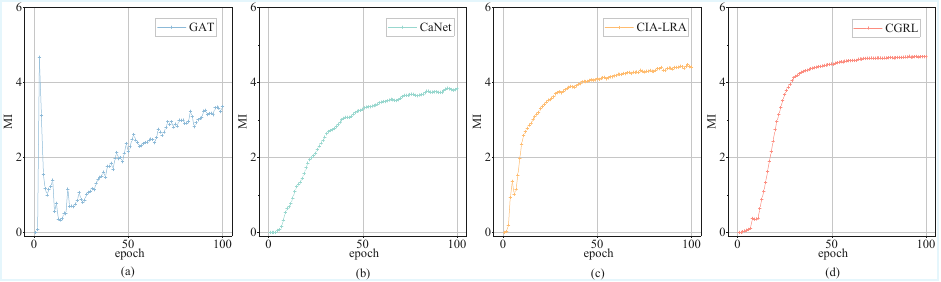}}
		\caption{Mutual information between predictive representations and ground-truth labels on the GOODCora dataset with word-domain concept shifts. Subfigures (a)–(d) correspond to GAT-based ERM, CANet, CIA-LRA, and CGRL, respectively.}
		\label{figure6}
	\end{center}

\end{figure}

\begin{table}[t]
	\centering

	\caption{Comparison of mutual information fluctuation metrics on datasets under the sptiotemporal shifts and different backbones.}
	\label{tab:mi_fluctuation}
	\resizebox{\textwidth}{!}{
		\begin{tabular}{cccccccccccc}
			\toprule
			\multirow{2}{*}{Backbone} & \multirow{2}{*}{Method}
			& \multicolumn{2}{c}{Cora}
			& \multicolumn{2}{c}{Citeseer}
			& \multicolumn{2}{c}{Pubmed}
			& \multicolumn{2}{c}{Arxiv}
			& \multicolumn{2}{c}{Twitch} \\
			\cmidrule(lr){3-4}
			\cmidrule(lr){5-6}
			\cmidrule(lr){7-8}
			\cmidrule(lr){9-10}
			\cmidrule(lr){11-12}
			& & Jitter & TailMI & Jitter & TailMI & Jitter & TailMI & Jitter & TailMI & Jitter & TailMI \\
			\midrule
			\multirow{11}{*}{GCN}
			& ERM      & 2.21\footnotesize$\pm$0.08 & 2.47{\footnotesize$\pm$0.02} & 2.27{\footnotesize$\pm$0.21} & 2.33{\footnotesize$\pm$0.03} & 5.07{\footnotesize$\pm$0.29} & 1.17{\footnotesize$\pm$0.01} & 10.16{\footnotesize$\pm$0.86} & 1.71{\footnotesize$\pm$0.04} & 6.35{\footnotesize$\pm$0.53} & 0.31{\footnotesize$\pm$0.02} \\
			& IRM      & 4.17{\footnotesize$\pm$1.26} & 2.32{\footnotesize$\pm$0.01} & 7.22{\footnotesize$\pm$1.75} & 2.11{\footnotesize$\pm$0.03} & 9.44{\footnotesize$\pm$0.43} & 1.13{\footnotesize$\pm$0.01} & 13.39{\footnotesize$\pm$3.01} & 1.85{\footnotesize$\pm$0.01} & 5.73{\footnotesize$\pm$0.77} & 0.09{\footnotesize$\pm$0.00} \\
			& Coral    & 1.77{\footnotesize$\pm$0.21} & 1.70{\footnotesize$\pm$0.59} & 1.60{\footnotesize$\pm$0.28} & 1.57{\footnotesize$\pm$0.53} & 2.57{\footnotesize$\pm$0.28} & 0.85{\footnotesize$\pm$0.19} & 4.39{\footnotesize$\pm$2.08} & 0.92{\footnotesize$\pm$0.35} & 0.83{\footnotesize$\pm$1.02} & 0.07{\footnotesize$\pm$0.08} \\
			& DANN     & 1.87{\footnotesize$\pm$0.27} & 1.71{\footnotesize$\pm$0.62} & 1.74{\footnotesize$\pm$0.24} & 1.69{\footnotesize$\pm$0.52} & 2.47{\footnotesize$\pm$0.15} & 1.09{\footnotesize$\pm$0.02} & 4.37{\footnotesize$\pm$3.35} & 0.99{\footnotesize$\pm$0.44} & 0.66{\footnotesize$\pm$1.15} & 0.07{\footnotesize$\pm$0.04} \\
			& GroupDRO & 1.85{\footnotesize$\pm$0.23} & 1.69{\footnotesize$\pm$0.58} & 1.77{\footnotesize$\pm$0.21} & 1.51{\footnotesize$\pm$0.49} & 2.87{\footnotesize$\pm$0.12} & 0.81{\footnotesize$\pm$0.18} & 5.53{\footnotesize$\pm$1.37} & 1.08{\footnotesize$\pm$0.45} & 0.70{\footnotesize$\pm$0.07} & 0.06{\footnotesize$\pm$0.03} \\
			& Mixup    & 0.85{\footnotesize$\pm$0.61} & 1.76{\footnotesize$\pm$0.51} & 1.20{\footnotesize$\pm$0.54} & 1.66{\footnotesize$\pm$0.41} & 2.36{\footnotesize$\pm$0.03} & 0.94{\footnotesize$\pm$0.09} & 3.07{\footnotesize$\pm$0.90} & 1.41{\footnotesize$\pm$0.42} & 0.81{\footnotesize$\pm$0.03} & 0.05{\footnotesize$\pm$0.02} \\
			& SRGNN    & 1.07{\footnotesize$\pm$0.06} & 2.47{\footnotesize$\pm$0.01} & 0.90{\footnotesize$\pm$0.09} & 2.36{\footnotesize$\pm$0.08} & 2.66{\footnotesize$\pm$0.10} & 1.01{\footnotesize$\pm$0.02} & 4.28{\footnotesize$\pm$0.10} & 0.98{\footnotesize$\pm$0.04} & 4.34{\footnotesize$\pm$0.08} & 0.16{\footnotesize$\pm$0.00} \\
			& EERM     & 2.43{\footnotesize$\pm$0.90} & 1.17{\footnotesize$\pm$0.04} & 1.00{\footnotesize$\pm$0.10} & 0.47{\footnotesize$\pm$0.01} & -- & -- & -- & -- & 5.21{\footnotesize$\pm$1.23} & 0.18{\footnotesize$\pm$0.02} \\
			& CaNet    & 0.43{\footnotesize$\pm$0.02} & 2.54{\footnotesize$\pm$0.00} & 0.53{\footnotesize$\pm$0.02} & 2.43{\footnotesize$\pm$0.00} & 1.69{\footnotesize$\pm$0.17} & 1.20{\footnotesize$\pm$0.01} & 5.54{\footnotesize$\pm$0.77} & 1.05{\footnotesize$\pm$0.04} & 4.58{\footnotesize$\pm$0.55} & 0.22{\footnotesize$\pm$0.00} \\
			& CIA-LRA  & 1.20{\footnotesize$\pm$0.52} & 1.22{\footnotesize$\pm$0.02} & 0.66{\footnotesize$\pm$0.05} & 2.34{\footnotesize$\pm$0.02} & -- & -- & -- & -- & 4.89{\footnotesize$\pm$0.41} & 0.26{\footnotesize$\pm$0.01} \\
			& CGRL     & 0.40{\footnotesize$\pm$0.02} & 2.62{\footnotesize$\pm$0.03} & 0.44{\footnotesize$\pm$0.08} & 2.51{\footnotesize$\pm$0.04} & 1.40{\footnotesize$\pm$0.08} & 1.26{\footnotesize$\pm$0.01} & 5.09{\footnotesize$\pm$0.43} & 1.94{\footnotesize$\pm$0.12} & 5.77{\footnotesize$\pm$0.83} & 0.36{\footnotesize$\pm$0.02} \\
			\midrule
			\multirow{11}{*}{GAT}
			& ERM      & 2.82{\footnotesize$\pm$0.56} & 1.98{\footnotesize$\pm$0.02} & 2.57{\footnotesize$\pm$0.52} & 2.36{\footnotesize$\pm$0.01} & 10.7165{\footnotesize$\pm$0.32} & 1.31{\footnotesize$\pm$0.12} & 9.17{\footnotesize$\pm$1.45} & 1.54{\footnotesize$\pm$0.05} & 9.00{\footnotesize$\pm$0.84} & 0.26{\footnotesize$\pm$0.05} \\
			& IRM      & 2.95{\footnotesize$\pm$0.90} & 2.32{\footnotesize$\pm$0.06} & 4.92{\footnotesize$\pm$0.98} & 2.18{\footnotesize$\pm$0.05} & 4.07{\footnotesize$\pm$0.44} & 1.13{\footnotesize$\pm$0.01} & 13.63{\footnotesize$\pm$2.33} & 1.49{\footnotesize$\pm$0.01} & 4.82{\footnotesize$\pm$0.41} & 0.09{\footnotesize$\pm$0.00} \\
			& Coral    & 2.11{\footnotesize$\pm$0.37} & 1.87{\footnotesize$\pm$0.58} & 1.99{\footnotesize$\pm$0.33} & 1.66{\footnotesize$\pm$0.55} & 3.80{\footnotesize$\pm$0.26} & 0.80{\footnotesize$\pm$0.18} & 4.51{\footnotesize$\pm$0.14} & 1.13{\footnotesize$\pm$0.47} & 1.09{\footnotesize$\pm$0.05} & 0.06{\footnotesize$\pm$0.03} \\
			& DANN     & 1.77{\footnotesize$\pm$0.37} & 1.88{\footnotesize$\pm$0.58} & 1.62{\footnotesize$\pm$0.34} & 1.66{\footnotesize$\pm$0.55} & 3.73{\footnotesize$\pm$0.26} & 0.88{\footnotesize$\pm$0.18} & 4.51{\footnotesize$\pm$0.14} & 1.12{\footnotesize$\pm$0.47} & 1.11{\footnotesize$\pm$0.06} & 0.08{\footnotesize$\pm$0.03} \\
			& GroupDRO & 1.68{\footnotesize$\pm$0.34} & 1.84{\footnotesize$\pm$0.56} & 1.61{\footnotesize$\pm$0.28} & 1.59{\footnotesize$\pm$0.51} & 4.40{\footnotesize$\pm$0.25} & 1.03{\footnotesize$\pm$0.17} & 4.61{\footnotesize$\pm$0.11} & 1.12{\footnotesize$\pm$0.46} & 1.14{\footnotesize$\pm$0.06} & 0.08{\footnotesize$\pm$0.04} \\
			& Mixup    & 0.89{\footnotesize$\pm$0.61} & 1.94{\footnotesize$\pm$0.52} & 0.86{\footnotesize$\pm$0.55} & 1.80{\footnotesize$\pm$0.44} & 3.75{\footnotesize$\pm$0.08} & 0.88{\footnotesize$\pm$0.10} & 4.97{\footnotesize$\pm$0.28} & 1.65{\footnotesize$\pm$0.47} & 1.02{\footnotesize$\pm$0.04} & 0.07{\footnotesize$\pm$0.02} \\
			& SRGNN    & 0.96{\footnotesize$\pm$0.04} & 2.51{\footnotesize$\pm$0.00} & 0.93{\footnotesize$\pm$0.25} & 2.39{\footnotesize$\pm$0.02} & 4.21{\footnotesize$\pm$0.08} & 0.91{\footnotesize$\pm$0.03} & 5.16{\footnotesize$\pm$0.18} & 1.37{\footnotesize$\pm$0.26} & 4.82{\footnotesize$\pm$0.32} & 0.20{\footnotesize$\pm$0.00} \\
			& EERM     & 2.83{\footnotesize$\pm$0.82} & 1.14{\footnotesize$\pm$0.03} & 1.55{\footnotesize$\pm$0.12} & 0.55{\footnotesize$\pm$0.02} & -- & -- & -- & -- & 4.38{\footnotesize$\pm$0.34} & 0.18{\footnotesize$\pm$0.01} \\
			& CaNet    & 0.80{\footnotesize$\pm$0.07} & 2.48{\footnotesize$\pm$0.00} & 0.66{\footnotesize$\pm$0.11} & 2.38{\footnotesize$\pm$0.01} & 2.31{\footnotesize$\pm$0.46} & 1.16{\footnotesize$\pm$0.01} & 4.56{\footnotesize$\pm$1.22} & 1.58{\footnotesize$\pm$0.17} & 5.98{\footnotesize$\pm$0.34} & 0.19{\footnotesize$\pm$0.01} \\
			& CIA-LRA  & 1.33{\footnotesize$\pm$0.23} & 2.34{\footnotesize$\pm$0.08} & 1.21{\footnotesize$\pm$0.12} & 2.28{\footnotesize$\pm$0.00} & -- & -- & -- & -- & 6.22{\footnotesize$\pm$0.31} & 0.21{\footnotesize$\pm$0.03} \\
			& CGRL     & 0.61{\footnotesize$\pm$0.09} & 2.58{\footnotesize$\pm$0.01} & 0.47{\footnotesize$\pm$0.05} & 2.47{\footnotesize$\pm$0.01} & 3.17{\footnotesize$\pm$0.32} & 1.278{\footnotesize$\pm$0.03} & 6.56{\footnotesize$\pm$0.87} & 1.65{\footnotesize$\pm$0.01} & 6.11{\footnotesize$\pm$0.32} & 0.31{\footnotesize$\pm$0.02} \\
			\bottomrule
		\end{tabular}
	}
\end{table}

Figures \ref{figure4}--\ref{figure6} provide representative visualizations on GOODCora under controlled distribution shifts. Specifically, Figure \ref{figure4} considers degree-domain covariate shifts with the GAT backbone, Figure \ref{figure5} considers degree-domain concept shifts with the GCN backbone, and Figure \ref{figure6} considers word-domain concept shifts with the GAT backbone. Across these settings, vanilla ERM exhibits pronounced MI oscillations, indicating unstable representation--label alignment under OOD shifts. CaNet and CIA-LRA partially reduce such fluctuations, but their trajectories still display noticeable instability or early saturation. In contrast, CGRL yields substantially smoother MI evolution and more stable convergence while maintaining a consistently higher MI level in the later training stage. These observations suggest that CGRL does not merely damp fluctuations, but stabilizes training while preserving stronger class-relevant information in the predictive representation. 

The quantitative results in Table~\ref{tab:mi_fluctuation} further support this conclusion. On Cora and Citeseer, CGRL achieves the lowest or near-lowest Jitter together with the highest TailMI under both GCN and GAT, showing that its improved stability is accompanied by stronger information retention rather than by trivial suppression of MI. On Pubmed, Arxiv, and Twitch, CGRL does not always yield the smallest Jitter; nevertheless, it consistently attains markedly larger TailMI than most competing methods, especially relative to the domain-generalization baselines whose TailMI values remain very small. This distinction is important: although some baselines appear less fluctuating, their low TailMI indicates that they preserve only limited label-relevant information in the predictive representation. By contrast, CGRL maintains substantially stronger semantic alignment between representations and labels, even when the resulting MI trajectory remains moderately dynamic. Therefore, on these more challenging or larger-scale datasets, the slightly higher Jitter of CGRL should be understood together with its much stronger TailMI, which indicates a more informative and still competitive optimization trajectory rather than a collapse of stable learning. 

Overall, the evidence from both the MI trajectories and the summary metrics shows that CGRL alleviates \emph{Info-Jitter} in a substantive manner. It not only stabilizes the representation--label alignment under distribution shifts, but also preserves richer discriminative information in the predictive representations. This behavior is consistent with the design of our framework: the causal modulation mechanism suppresses environment-sensitive interference during message passing, while the intra-class aggregation and inter-class separation objectives further regularize the latent space toward a more stable node-level causal representation.

\subsection{Softmax Confidence}
In this section, we explain why we adopt the Gumbel-softmax relaxation during training instead of directly using a deterministic softmax. The motivation is twofold. First, the Gumbel-softmax provides a differentiable stochastic approximation to discrete selection, which is more suitable for our energy-based reconstruction pipeline and allows stable end-to-end optimization via backpropagation. Second, the injected stochasticity regularizes the causal modulation process and prevents the model from becoming overly confident in in-distribution (ID) patterns. This is particularly important for OOD generalization. When plain softmax becomes highly saturated, the predicted probability of one branch may approach $1$ while the others collapse toward $0$, leading to sharply peaked confidence scores and diminished gradient flow. Such overconfident behavior may fit ID data well, but it also encourages the model to over-specialize to environment-specific patterns, thereby weakening generalization under distribution shifts. By contrast, the Gumbel-softmax maintains a more moderate confidence distribution during training, enabling the model to preserve sufficient uncertainty and learn more robust representations.

To empirically illustrate this effect, Figure \ref{figure7} presents the confidence-score histograms produced by Gumbel-softmax and plain softmax on the Cora and GOODCora datasets with degree-domain covariate shifts, where the y-axis denotes the density at each confidence level. The figure shows that softmax tends to produce heavily concentrated high-confidence outputs, indicating a clear saturation effect. In contrast, the Gumbel-softmax yields a more balanced confidence distribution: it avoids the extreme over-saturation of softmax while not collapsing into uniformly low-confidence predictions. This suggests that the Gumbel-softmax regularizes the training dynamics by preventing blindly overconfident assignments and preserving a more appropriate confidence range. As a result, the model can better resist spurious ID-specific patterns and achieve stronger robustness under distribution shifts.
\begin{figure}[t]
\begin{center}
\centerline{\includegraphics[width=0.8\columnwidth]{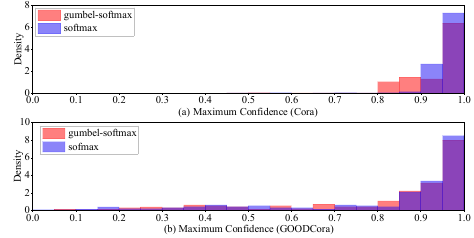}}
\caption{Softmax confidence for Cora and GOODCora datasets.}

\label{figure7}
\end{center}
\end{figure}

\subsection{The Effect of Spurious Correlations on MI Dynamics}

To better understand the origin of the \emph{Info-Jitter} phenomenon, we further compare the MI trajectories on ID and OOD data, as shown in Figure \ref{figure9}. The motivation of this experiment is to examine whether the discrepancy between ID and OOD MI dynamics can serve as empirical evidence of the model's reliance on spurious correlations. Intuitively, if a model overfits environment-specific statistical patterns in ID data, then once these patterns are disrupted under OOD shifts, its representation--label alignment should become less stable, resulting in lower MI values and more pronounced fluctuations. In contrast, if the learned representations are primarily supported by cross-distribution causal mechanisms, the MI trajectories on ID and OOD data should remain more consistent. We summarize the observations as follows. 
First, both GCN and GAT exhibit similar qualitative trends on ID and OOD data, but with a clear degradation under OOD shifts. In the early stage of training, MI increases rapidly, indicating that the models begin to capture predictive correlations from the training graph. However, as training proceeds, the MI trajectories of both models continue to fluctuate without stable convergence. More importantly, the OOD MI is consistently lower than the corresponding ID MI, and its fluctuation is substantially stronger. This pattern suggests that although the two models learn some label-relevant information from ID data, they also rely on unstable spurious correlations. Once these correlations are weakened or broken under distribution shifts, the models cannot maintain stable representation--label alignment, leading to both reduced MI and amplified instability.

Second, CGRL shows a markedly different behavior. Although the MI rises sharply in the early stage, it quickly enters a stable regime and then converges smoothly on both ID and OOD data. Compared with the baselines, the gap between the ID and OOD MI trajectories is much smaller, and the OOD curve remains substantially more stable throughout training. This observation suggests that CGRL depends less on environment-specific spurious patterns and instead learns more robust causal representations that remain effective across distributions. Therefore, Fig \ref{figure9} provides empirical evidence that the disruption of spurious correlations is closely related to MI fluctuation, and that the proposed causal-guided disentanglement strategy effectively mitigates this effect.
\begin{figure}[t]
\begin{center}
\centerline{\includegraphics[width=1\columnwidth]{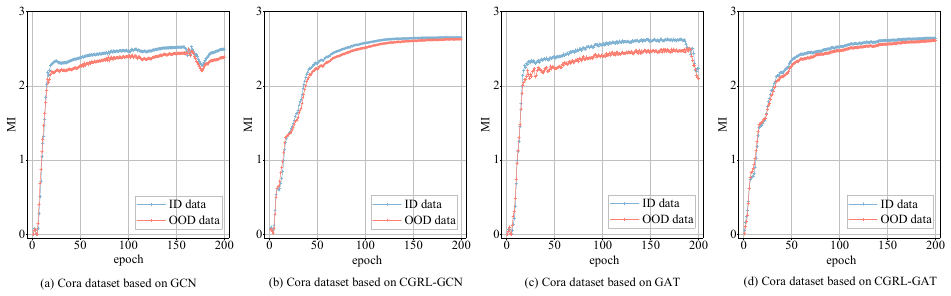}}
\caption{Mutual information on ID and OOD data.}

\label{figure9}
\end{center}
\end{figure}

\begin{table}[t]
	\caption{Ablation results under feature and spatiotemporal shifts across datasets.}
	\label{table7}
	\centering
				\resizebox{\textwidth}{!}{
						\begin{tabular}{cccccccccc}
							\toprule
							\multirow{2}{*}{Methods} & \multirow{2}{*}{Cora} & \multirow{2}{*}{Citeseer} & \multirow{2}{*}{Pubmed}&\multicolumn{3}{c}{Arxiv} & \multicolumn{3}{c}{Twitch}\\
							\cmidrule(lr){5-7} \cmidrule(lr){8-10}\\
							&&&& 2014-2015 & 2016-2017 & 2018-2020 & ES & FR & EN\\
							\midrule
							w/o $\mathcal{L}_{sup}$  & 70.34\footnotesize±1.17& 69.89\footnotesize±2.31& 78.52\footnotesize±2.17&46.23\footnotesize±0.27 & 45.14\footnotesize±0.28 & 42.17\footnotesize±0.36 & 65.47\footnotesize±0.08& 49.48\footnotesize±0.12& 58.78\footnotesize±0.04\\
							w/o $\mathcal{L}_{rec}$ & 91.78\footnotesize±0.32& 87.12\footnotesize±0.19&85.73\footnotesize±0.41 & 49.82\footnotesize±0.18 & 48.43\footnotesize±0.21 & 46.77\footnotesize±0.67 & 66.56\footnotesize±0.27& 52.67\footnotesize±0.19& 62.76\footnotesize±0.08\\
							w/o $\mathcal{L}_{intra}$/$\mathcal{L}^{sc}_{intra}$   & 96.64\footnotesize±0.23& 94.14\footnotesize±0.21& 90.81\footnotesize±0.59& 58.63\footnotesize±0.44 & 58.64\footnotesize±0.63 & 56.12\footnotesize±1.02 & 66.89\footnotesize±0.18& 52.72\footnotesize±0.34& 62.35\footnotesize±0.13\\
							w/o $\mathcal{L}_{inter}$/$\mathcal{L}^{sc}_{inter}$&97.57\footnotesize±0.21&95.40\footnotesize±0.18&92.47\footnotesize±0.31&58.83\footnotesize±0.24 & 58.22\footnotesize±0.28 & 57.37\footnotesize±0.98 & 67.05\footnotesize±0.32& 53.01\footnotesize±0.29& 63.10\footnotesize±0.15\\
							
							CGRL&99.14\footnotesize±0.16&97.96\footnotesize±0.23&96.90\footnotesize±0.44&61.57\footnotesize±0.32 & 60.12\footnotesize±0.56 & 59.45\footnotesize±0.82 & 68.02\footnotesize±0.29& 54.98\footnotesize±0.23& 64.89\footnotesize±0.12 \\
							\bottomrule
						\end{tabular}
					}
\end{table}

\subsection{Ablation Studies}

We conduct ablation studies to examine the contribution of each component in CGRL. Specifically, we consider four variants: w/o $\mathcal{L}_{sup}$, w/o $\mathcal{L}_{rec}$, w/o $\mathcal{L}_{intra}$/$\mathcal{L}^{sc}_{intra}$, and w/o $\mathcal{L}_{inter}$/$\mathcal{L}^{sc}_{inter}$, where ``w/o'' denotes removing the corresponding objective from the full model. Table~\ref{table7} reports the results under feature and spatiotemporal shifts, while Fig \ref{figure9} further presents the corresponding ablation trends under covariate and concept shifts.

Specifically, a consistent pattern can be observed across all shift types: removing any component leads to a clear performance drop, which shows that the full effectiveness of CGRL relies on the joint action of supervised prediction, structural reconstruction, and latent disentanglement. Among all variants, w/o $\mathcal{L}_{sup}$ suffers the most severe degradation, indicating that the supervision term is indispensable for preserving discriminative predictive representations. Without direct label supervision, the remaining objectives alone cannot maintain satisfactory node classification performance. The reconstruction term $\mathcal{L}_{rec}$ also plays a crucial role. As shown in Table~\ref{table7}, removing $\mathcal{L}_{rec}$ causes substantial degradation, especially on Pubmed, Arxiv, and Twitch, suggesting that energy-based reconstruction is important for stabilizing structural representation learning under more challenging distribution shifts. In addition, both $\mathcal{L}_{intra}$/$\mathcal{L}^{sc}_{intra}$ and $\mathcal{L}_{inter}$/$\mathcal{L}^{sc}_{inter}$ contribute consistently to the final performance. Removing either one weakens the model across datasets, confirming that intra-class compactness and inter-class separation are both necessary for node-level causal disentanglement.

Fig \ref{figure9} shows that the same conclusion remains valid under covariate and concept shifts: the full CGRL model consistently outperforms all ablated variants, and no single component can recover the performance of the complete framework on its own. Taken together, these results verify that CGRL benefits from the coordinated interaction of all its objectives, rather than from any isolated loss term.

\begin{figure}[h]
	\begin{center}
		\centerline{\includegraphics[width=1.0\columnwidth]{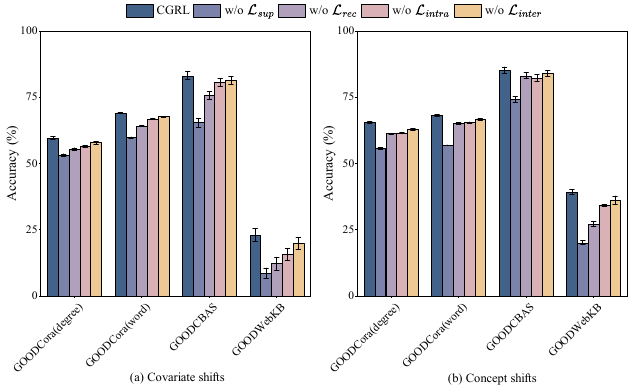}}
		\caption{Ablation results under covariate and concept shifts across datasets.}
		
		\label{figure9}
	\end{center}
\end{figure}

\begin{figure}[h]
	\begin{center}
		\centerline{\includegraphics[width=1.0\columnwidth]{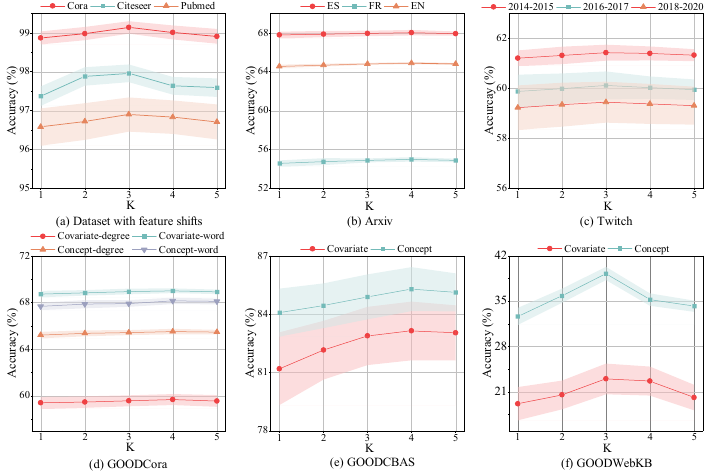}}
		\caption{The sensitivity of branches K to all datasets.}
		
		\label{figure10}
	\end{center}
\end{figure}


\begin{figure}[t]
	\begin{center}
		\centerline{\includegraphics[width=1.0\columnwidth]{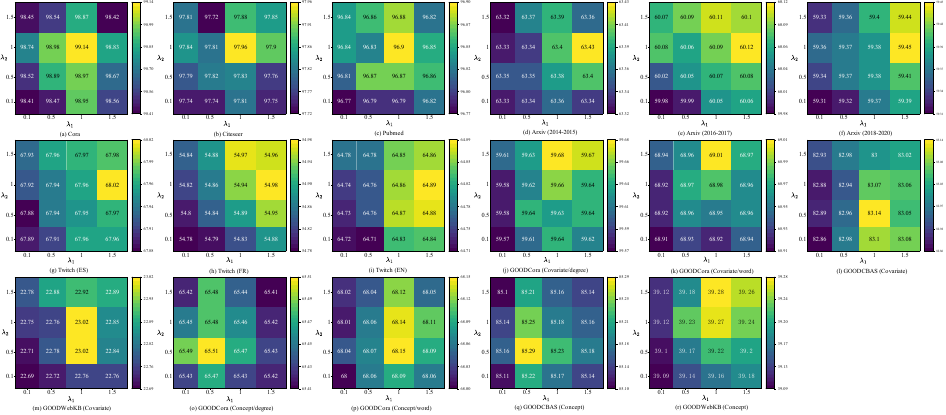}}
		\caption{The sensitivity of $\lambda_1$ and $\lambda_2$ to all datasets.}
		
		\label{figure11}
	\end{center}
\end{figure}
\subsection{Sensitivity to Hyperparameters}

CGRL involves three key hyperparameters: the branch number $K$, and the regularization weights $\lambda_1$ and $\lambda_2$, which control the strengths of $\mathcal{L}_{intra}$ and $\mathcal{L}_{inter}$, respectively. We next analyze the sensitivity of the proposed framework to these hyperparameters.

Figure \ref{figure10} presents the sensitivity of CGRL to the branch number $K$ on different datasets. For all datasets, we vary $K$ in $\{1,2,3,4,5\}$. Specifically, the best performance is consistently achieved around $K=3$ on Cora, Citeseer, and Pubmed, suggesting that a moderate number of branches is sufficient to capture diverse causal modulation patterns. When $K$ becomes larger, the gains tend to saturate or slightly decline, implying that excessive branches may introduce redundant modeling capacity and make optimization less effective. On GOODCora, the performance varies only mildly with $K$ and reaches its peak at $K=4$, further demonstrating the robustness of CGRL to the branch configuration under controlled OOD shifts.

We further study the sensitivity of the regularization weights $\lambda_1$ and $\lambda_2$ on Cora and Citeseer in Fig \ref{figure11}. In general, overly small values of $\lambda_1$ and $\lambda_2$ weaken the effects of intra-class aggregation and inter-class separation, making it difficult for the model to fully exploit the geometry-aware regularization imposed on the latent space. In contrast, excessively large values may over-emphasize these regularizers and impair the balance between structural disentanglement and supervised prediction. As shown in Figure \ref{figure11}, CGRL achieves the best overall performance at $\lambda_1=\lambda_2=1.0$ on Cora, and at $\lambda_1=1.5,\lambda_2=1.0$ on Arxiv. These results indicate that both $\mathcal{L}_{intra}$ and $\mathcal{L}_{inter}$ are important to the framework, while their relative strengths should be properly balanced to obtain the best generalization performance.


\subsection{Limitations}
A limitation of CGRL lies in the computational cost of its original pairwise objectives. Under naive full-graph evaluation, both $\mathcal{L}_{intra}$ and $\mathcal{L}_{inter}$ scale quadratically with the number of nodes, which poses substantial challenges on large graphs such as Pubmed and Arxiv. To make training practical in these settings, we employ a scalable prototype-and-sampling implementation. While this approximation significantly improves memory efficiency and runtime, it also introduces several limitations.

First, the prototype-based approximation in $\mathcal{L}_{intra}$ compresses rich intra-class pairwise relations into class-level summary statistics. Although this preserves the overall tendency toward class-wise compactness, it may fail to fully capture fine-grained local variations within a class, especially when class distributions are heterogeneous. As a result, the learned representation may not completely recover the ideal causal intra-class structure implied by the original full-pair objective. Second, the sampling-based approximation in $\mathcal{L}_{inter}$ only observes a subset of inter-class interactions at each optimization step. Consequently, some hard negative pairs or informative boundary cases may be under-sampled, which weakens the model’s ability to consistently enforce global inter-class separation. This may in turn leave residual spurious overlap in the latent space and partially limit the disentanglement between causal and non-causal components. Third, the stochasticity introduced by negative sampling can increase optimization variance across iterations. On large-scale graphs, this effect may lead to less stable mutual-information dynamics, as reflected by relatively larger Jitter values in some cases. In other words, although the scalable implementation preserves the overall optimization direction of the original objectives, it does not always preserve their full geometric fidelity or training stability.

These trade-offs indicate that the current scalable design, although effective in practice, does not yet fully resolve the tension between computational tractability and stable causal disentanglement. Developing more efficient optimization schemes that better preserve the structural faithfulness of the original pairwise objectives remains an important direction for future work.

\section{CONCLUSION}

In this paper, we study node-level out-of-distribution (OOD) generalization of graph neural networks from a causal perspective. We show that, under distribution shifts from environmental noise, GNNs suffer not only from degraded predictive performance but also from unstable representation learning, which we characterize as the \emph{Info-Jitter} phenomenon. To address these challenges, we formulate a node-classification-specific causal graph and use $do$-calculus from causal inference to block the interference of environmental noise, thereby deriving a deconfounded interventional objective for robust node-level prediction. Based on this formulation, we further develop a variational lower bound that provides a principled basis for disentangling intra-class causal structure from inter-class interference. Building on these insights, we propose CGRL, a causal-guided representation learning framework that integrates multi-branch causal modulation, intra-class aggregation, inter-class separation, energy-based reconstruction, and supervised prediction. Extensive experiments on diverse benchmark datasets demonstrate that CGRL consistently improves node-level OOD generalization and substantially alleviates the \emph{Info-Jitter} phenomenon. In the future, it would be valuable to design more efficient optimization schemes for large-scale graphs, strengthen the theoretical understanding of the relationship between causal disentanglement and mutual-information stability, and extend the proposed framework to broader graph learning settings.


\bibliographystyle{ACM-Reference-Format}
\bibliography{sample-base}

\appendix
\section{Proof of Theorem \ref{eq_backdoor}}\label{appendix}
\begin{proof}
	 For clarity, let $G$ denote the causal DAG in Figure \ref{image2a}, so as to avoid confusion with the input graph \(G\) and the ego-graph \(\mathcal{G}_v\). 	We prove the result using Rules 2 and 3 of $do$-calculus from causal inference \cite{primer}.
	\begin{itemize}
		\item Rule 2 (action--observation exchange) states that for disjoint variable sets
		$T, B, M, U$,
		\begin{equation}
			P(T\mid do(B),do(M),U)
			=
			P(T\mid do(B),M,U), if
			(T\perp\!\!\!\perp M\mid B,U)_{G_{\overline{B}\underline{M}}},
		\end{equation}
		where $G_{\overline{B}\underline{M}}$ denotes the graph obtained by deleting all incoming edges into $B$ and all outgoing edges from $M$.
		
		\item Rule 3 (insertion/deletion of actions) states that
		\begin{equation}
			P(T\mid do(B),do(M),U)
			=
			P(T\mid do(B),U), if (T\perp\!\!\!\perp M\mid B,U)_{G_{\overline{B},\overline{M(U)}}},
		\end{equation}
		where $U$ denotes the subset of $M$ that is not an ancestor of any variable in $U$ in $G_{\overline{B}}$.
	\end{itemize}
	
	We now apply these rules to the intervention on $\mathbf{Z}_p$. First, by the law of total probability for interventional distributions,
	\begin{equation}
		P(\mathbf{Y}\mid do(\mathbf{Z}_p))
		=
		\sum_{\mathcal{G}_v}
		P(\mathbf{Y}\mid do(\mathbf{Z}_p),\mathcal{G}_v)\,
		P(\mathcal{G}_v\mid do(\mathbf{Z}_p)).
		\label{eq:backdoor_step1}
	\end{equation}
	
	Second, $\mathcal{G}_v$ satisfies  $(\mathbf{Y}\perp\!\!\!\perp \mathbf{Z}_p \mid \mathcal{G}_v)_{G_{\underline{\mathbf{Z}_p}}}$, where $G_{\underline{\mathbf{Z}_p}}$ denotes the graph obtained by deleting all outgoing edges from $\mathbf{Z}_p$. Therefore, by Rule 2, we have
	\begin{equation}
		P(\mathbf{Y}\mid do(\mathbf{Z}_p),\mathcal{G}_v)
		=
		P(\mathbf{Y}\mid \mathbf{Z}_p,\mathcal{G}_v).
		\label{eq:backdoor_step2}
	\end{equation}
	
	Third, \(\mathcal{G}_v\) is a pre-treatment variable set and contains no descendants of \(\mathbf{Z}_p\). Hence intervening on \(\mathbf{Z}_p\) does not alter the generating mechanism of \(\mathcal{G}_v\). Equivalently, by Rule 3, we have
	\begin{equation}
		P(\mathcal{G}_v\mid do(\mathbf{Z}_p))
		=
		P(\mathcal{G}_v).
		\label{eq:backdoor_step3}
	\end{equation}
	
	Substituting Eqs.~\eqref{eq:backdoor_step2} and \eqref{eq:backdoor_step3} into Eq.~\eqref{eq:backdoor_step1}, we obtain
	\begin{equation}
		P(\mathbf{Y}\mid do(\mathbf{Z}_p))
		=
		\sum_{\mathcal{G}_v}
		P(\mathbf{Y}\mid \mathbf{Z}_p,\mathcal{G}_v)\,P(\mathcal{G}_v)
		=
		\mathbb{E}_{P(\mathcal{G}_v)}
		\big[P(\mathbf{Y}\mid \mathbf{Z}_p,\mathcal{G}_v)\big].
	\end{equation}
	This completes the proof.
\end{proof}

\section{Proof of Theorem \ref{theorem4.2}}\label{appendixA}
Before the proof of Eq. \ref{eq:3}, we first introduce several independence rules of SCM. Consider three random variables $T$, $B$, and $M$ in causal graph:

    (1) If there exists a causal chain structure $T\rightarrow B\rightarrow M$, then $T\perp\!\!\!\perp M|B$ (conditional independence),
    
    (2) If there exists a collider structure $T\rightarrow B\leftarrow M$, then $T\perp\!\!\!\perp M$ (unconditional independence),
    
    (3) If there exists a fork structure $T\leftarrow B\rightarrow M$, then $T\perp\!\!\!\perp M|B$ (conditional independence).
    
Therefore, by virtue of the log-likelihood, we have
\begin{equation}\label{eq:6}
    \begin{aligned}
    \log P(\mathbf{Y}|do(\mathbf{Z}_p)&=\log\sum_{\mathcal{G}_v} P(\mathbf{Y}|\mathbf{Z}_p, \mathcal{G}_v)P(\mathcal{G}_v)\\
    &=\log\sum_{\mathcal{G}_v, \mathbf{Z}_a, \mathbf{Z}_r}P(\mathbf{Y},\mathbf{Z}_a,\mathbf{Z}_r|\mathbf{Z}_p,\mathcal{G}_v)P(\mathcal{G}_v)\\
    &=\log \sum_{\mathcal{G}_v,\mathbf{Z}_a,\mathbf{Z}_r}P(\mathbf{Y}|\mathbf{Z}_p,\mathbf{Z}_a,\mathcal{G}_v)P(\mathbf{Z}_a|\mathbf{Z}_p)P(\mathbf{Z}_r|\mathbf{Z}_p)P(\mathcal{G}_v)\\
    &=\log \sum_{\mathcal{G}_v,\mathbf{Z}_a,\mathbf{Z}_r}P(\mathbf{Y}|\mathbf{Z}_p,\mathbf{Z}_a,\mathcal{G}_v)P(\mathbf{Z}_a|\mathbf{Z}_p)P(\mathbf{Z}_r|\mathbf{Z}_p)P(\mathcal{G}_v)\frac{\!Q_\phi(\mathbf{Z}_r,\mathbf{Z}_r,\mathcal{G}_v|\mathbf{Z}_p)}{Q_\phi(\mathbf{Z}_r,\mathbf{Z}_r,\mathcal{G}_v|\mathbf{Z}_p)}.
\end{aligned}
\end{equation}
We designate the Eq. \ref{eq:6} as $\mathcal{L}(\theta;\phi)$. Through calculation the posterior distribution $Q_\phi(\mathbf{Z}_a, \mathbf{Z}_r,\mathcal{G}_v|\mathbf{Z}_p)$, we obtain
\begin{align}
    Q_\phi(\mathbf{Z}_a, \mathbf{Z}_r,\mathcal{G}_v|\mathbf{Z}_p)=Q_\phi(\mathcal{G}|\mathbf{Z}_a, \mathbf{Z}_r, \mathbf{Z}_p) Q_\phi(\mathbf{Z}_a| \mathbf{Z}_r,\mathbf{Z}_p) Q_\phi(\mathbf{Z}_r|\mathbf{Z}_p).
\end{align}
Based on the aforementioned three rules, we can simplify this equation to
\begin{align}
     Q_\phi(\mathbf{Z}_a, \mathbf{Z}_r,\mathcal{G}_v|\mathbf{Z}_p)=Q_\phi(\mathcal{G}_v|\mathbf{Z}_p)Q_\phi(\mathbf{Z}_a|\mathbf{Z}_p)Q_\phi(\mathbf{Z}_r|\mathbf{Z}_p).
\end{align}
Therefore, from Jensen's inequality, we have 
\begin{equation}\label{eq:9}
    \begin{aligned}
        \mathcal{L}(\theta;\phi)&\geq \sum_{\mathcal{G}_v,\mathbf{Z}_a,\mathbf{Z}_r} Q_\phi(\mathbf{Z}_a, \mathbf{Z}_r,\mathcal{G}_v|\mathbf{Z}_p)\log P(\mathbf{Y}|\mathbf{Z}_p,\mathbf{Z}_a,\mathcal{G}_v)P(\mathbf{Z}_a|\mathbf{Z}_p)P(\mathbf{Z}_r|\mathbf{Z}_p)P(\mathcal{G}_v)\frac{1}{ Q_\phi(\mathbf{Z}_a, \mathbf{Z}_r,\mathcal{G}_v|\mathbf{Z}_p)}\\
        &=\sum_{\mathcal{G}_v,\mathbf{Z}_a,\mathbf{Z}_r}Q_\phi(\mathcal{G}_v|\mathbf{Z}_p)Q_\phi(\mathbf{Z}_a|\mathbf{Z}_p)Q_\phi(\mathbf{Z}_r|\mathbf{Z}_p)\log P(\mathbf{Y}|\mathbf{Z}_p,\mathbf{Z}_a,\mathcal{G}_v)\\
        &+\sum_{\mathcal{G}_v,\mathbf{Z}_a,\mathbf{Z}_r}Q_\phi(\mathcal{G}_v|\mathbf{Z}_p)Q_\phi(\mathbf{Z}_a|\mathbf{Z}_p)Q_\phi(\mathbf{Z}_r|\mathbf{Z}_p)\log P(\mathbf{Z}_a|\mathbf{Z}_p)\\
        &+\sum_{\mathcal{G}_v,\mathbf{Z}_a,\mathbf{Z}_r}Q_\phi(\mathcal{G}_v|\mathbf{Z}_p)Q_\phi(\mathbf{Z}_a|\mathbf{Z}_p)Q_\phi(\mathbf{Z}_r|\mathbf{Z}_p)\log P(\mathbf{Z}_r|\mathbf{Z}_p)\\
        &+\sum_{\mathcal{G}_v,\mathbf{Z}_a,\mathbf{Z}_r}Q_\phi(\mathcal{G}_v|\mathbf{Z}_p)Q_\phi(\mathbf{Z}_a|\mathbf{Z}_p)Q_\phi(\mathbf{Z}_r|\mathbf{Z}_p)\log P(\mathcal{G}_v)\\
       & -\sum_{\mathcal{G}_v,\mathbf{Z}_a,\mathbf{Z}_r}Q_\phi(\mathcal{G}_v|\mathbf{Z}_p)Q_\phi(\mathbf{Z}_a|\mathbf{Z}_p)Q_\phi(\mathbf{Z}_r|\mathbf{Z}_p) \log Q_\phi(\mathcal{G}_v|\mathbf{Z}_p)Q_\phi(\mathbf{Z}_a|\mathbf{Z}_p)Q_\phi(\mathbf{Z}_r|\mathbf{Z}_p).
    \end{aligned}
\end{equation}
Since the marginal distribution probability $\sum_{\mathcal{G}_v}Q_\phi(\mathcal{G}_v|\mathbf{Z}_p)=1$, $\sum_{\mathbf{Z}_a}Q_\phi(\mathbf{Z}_a|\mathbf{Z}_p)=1$ and $\sum_{\mathbf{Z}_r}Q_\phi(\mathbf{Z}_r\allowbreak |\mathbf{Z}_p)=1$ the final step of Eq. \ref{eq:9} can be simplified to
\begin{equation}
    \begin{aligned}
        \mathcal{L}(\theta,\phi)&\geq\sum_{\mathcal{G}_v,\mathbf{Z}_a}Q_\phi(\mathcal{G}_v|\mathbf{Z}_p)Q_\phi(\mathbf{Z}_a|\mathbf{Z}_p)\log P(\mathbf{Y}|\mathbf{Z}_p,\mathbf{Z}_a,\mathcal{G}_v)
        +\sum_{\mathbf{Z}_a}Q_\phi(\mathbf{Z}_a|\mathbf{Z}_p)\log P(\mathbf{Z}_a|\mathbf{Z}_p)\\
       & +\sum_{\mathbf{Z}_r}Q_\phi(\mathbf{Z}_r|\mathbf{Z}_p)\log P(\mathbf{Z}_r|\mathbf{Z}_p)\
        +\sum_{\mathcal{G}_v}Q_\phi(\mathcal{G}_v|\mathbf{Z}_p)\log P(\mathcal{G}_v)
        -\sum_{\mathcal{G}_v}Q_\phi(\mathcal{G}_v|\mathbf{Z}_p) \log Q_\phi(\mathcal{G}_v|\mathbf{Z}_p)\\
        &-\sum_{\mathbf{Z}_a}Q_\phi(\mathbf{Z}_a|\mathbf{Z}_p)\log Q_\phi(\mathbf{Z}_a|\mathbf{Z}_p)
        -\sum_{\mathbf{Z}_r}Q_\phi(\mathbf{Z}_r|\mathbf{Z}_p) \log Q_\phi(\mathbf{Z}_r|\mathbf{Z}_p)\\
        &=\mathbb{E}_{Q_\phi(\mathcal{G}_v|\mathbf{Z}_p)}\mathbb{E}_{Q_\phi(\mathbf{Z}_a|\mathbf{Z}_p)}\log P(\mathbf{Y}|\mathbf{Z}_p, \mathbf{Z}_a,\mathcal{G}_v)-\text{KL}(Q_\phi(\mathcal{G}_v|\mathbf{Z}_p)||P(\mathcal{G}_v))\\&-\text{KL}(Q_\phi(\mathbf{Z}_a|\mathbf{Z}_p)||P(\mathbf{Z}_a|\mathbf{Z}_p))-\text{KL}(Q_\phi(\mathbf{Z}_r|\mathbf{Z}_p)||P(\mathbf{Z}_r|\mathbf{Z}_p)).
    \end{aligned}
\end{equation}
In the end, through these derivations, we prove Theorem \ref{theorem4.2} and obtain the theoretical lower bound to be optimized.

\section{Proof of Proposition \ref{prop_intra}}\label{appendixB}

\begin{proof}
	Let $d$ denote the dimensionality of the latent representation space. Although the original graph data resides in a non-Euclidean topological domain, the GNN encoder maps the nodes into a Euclidean latent space $\mathbb{R}^d$. For a specific class $c \in \mathcal{Y}$, we characterize the class-conditional variational posterior $Q_\phi(\mathbf{Z}_a|\mathbf{Z}_p)$ by its empirical mean $\boldsymbol{\mu}_Q^{(c)}$ and covariance matrix $\boldsymbol{\Sigma}_Q^{(c)}$. Based on the analytical modeling choice in Assumption \ref{assumption_causal}, the corresponding prior is the isotropic Gaussian $P(\mathbf{Z}_a|c) = \mathcal{N}(\boldsymbol{\mu}_c, \sigma_p^2 \mathbf{I})$.
	
	The Kullback-Leibler (KL) divergence between two $d$-dimensional multivariate Gaussians, $Q = \mathcal{N}(\boldsymbol{\mu}_0, \boldsymbol{\Sigma}_0)$ and $P = \mathcal{N}(\boldsymbol{\mu}_1, \boldsymbol{\Sigma}_1)$, is formulated as
	\begin{equation}\label{eq:general_kl}
		\text{KL}(Q \parallel P) = \frac{1}{2} \left[ \text{Tr}(\boldsymbol{\Sigma}_1^{-1} \boldsymbol{\Sigma}_0) + (\boldsymbol{\mu}_0 - \boldsymbol{\mu}_1)^\top \boldsymbol{\Sigma}_1^{-1} (\boldsymbol{\mu}_0 - \boldsymbol{\mu}_1) - d + \log \frac{|\boldsymbol{\Sigma}_1|}{|\boldsymbol{\Sigma}_0|} \right].
	\end{equation}
	
	Substituting the specified prior and posterior parameters for class $c$ into Eq. \ref{eq:general_kl}, the inverse covariance matrix of the prior is $\boldsymbol{\Sigma}_1^{-1} = \frac{1}{\sigma_p^2} \mathbf{I}$. This isotropic structure reduces the Mahalanobis distance term to a scaled squared Euclidean distance. With the differential entropy defined as $H(Q_\phi^{(c)}) = \frac{1}{2} \log(|\boldsymbol{\Sigma}_Q^{(c)}|) + \frac{d}{2} \log(2\pi e)$, the log-determinant ratio expands accordingly. Absorbing the terms independent of the optimization parameters $\phi$ into a const, the KL divergence for class $c$ reduces to
	\begin{equation} \label{eq:kl_gaussian_reduced}
		\text{KL}(Q^{(c)}_\phi(\mathbf{Z}_a|\mathbf{Z}_p) \parallel P^{(c)}_\theta(\mathbf{Z}_a|\mathbf{Z}_p)) = \frac{1}{2\sigma_p^2} \Big( \text{Tr}(\boldsymbol{\Sigma}_Q^{(c)}) + \|\boldsymbol{\mu}_Q^{(c)} - \boldsymbol{\mu}_c\|_2^2 \Big) - H(Q_\phi^{(c)}) + const.
	\end{equation}
	
	The variance penalty in Eq. \ref{eq:kl_gaussian_reduced} is governed by the covariance trace $\text{Tr}(\boldsymbol{\Sigma}_Q^{(c)})$. To connect this trace with the empirical objective, we evaluate the expected pairwise squared Euclidean distance between two conditionally independent nodes $i, j \in \mathcal{V}_c$, whose representations are drawn from the class-conditional posterior: $\mathbf{z}_{a,i}, \mathbf{z}_{a,j} \sim Q_\phi(\mathbf{Z}_a|\mathbf{Z}_p)$. Expanding the quadratic form yields
	\begin{align} \label{eq:trace_equivalence}
		\mathbb{E}_{\mathbf{z}_{a,i}, \mathbf{z}_{a,j} \sim Q_\phi} \Big[ \|\mathbf{z}_{a,i} - \mathbf{z}_{a,j}\|_2^2 \Big] &= \mathbb{E}_{i, j} \Big[ \|(\mathbf{z}_{a,i} - \boldsymbol{\mu}_Q^{(c)}) - (\mathbf{z}_{a,j} - \boldsymbol{\mu}_Q^{(c)})\|_2^2 \Big] \nonumber \\
		&= \mathbb{E}_{i}[\|\mathbf{z}_{a,i} - \boldsymbol{\mu}_Q^{(c)}\|_2^2] + \mathbb{E}_{j}[\|\mathbf{z}_{a,j} - \boldsymbol{\mu}_Q^{(c)}\|_2^2] - 2\mathbb{E}_{i, j}[\langle \mathbf{z}_{a,i} - \boldsymbol{\mu}_Q^{(c)}, \mathbf{z}_{a,j} - \boldsymbol{\mu}_Q^{(c)} \rangle].
	\end{align}
	Since nodes $i$ and $j$ are sampled conditionally independently, their cross-covariance expectation vanishes ($\mathbb{E}[\mathbf{z}_{a,i} - \boldsymbol{\mu}_Q^{(c)}] = \mathbf{0}$). Based on the definition of the covariance matrix, the expected squared distance to the intra-class mean is $\mathbb{E}[\|\mathbf{z}_a - \boldsymbol{\mu}_Q^{(c)}\|_2^2] = \text{Tr}(\boldsymbol{\Sigma}_Q^{(c)})$. Eq. \ref{eq:trace_equivalence} thus simplifies to
	\begin{equation} \label{eq:trace_empirical}
		\mathbb{E}_{\mathbf{z}_{a,i}, \mathbf{z}_{a,j} \sim Q_\phi} \Big[ \|\mathbf{z}_{a, i} - \mathbf{z}_{a, j}\|_2^2 \Big] = 2\text{Tr}(\boldsymbol{\Sigma}_Q^{(c)}).
	\end{equation}
	By definition, the empirical intra-class alignment loss computes this expected pairwise distance averaged over all classes $c \in \mathcal{Y}$. Substituting Eq. \ref{eq:trace_empirical} into the empirical formulation yields the exact identity $\mathcal{L}_{intra} = 2\mathbb{E}_{c \in \mathcal{Y}}[\text{Tr}(\boldsymbol{\Sigma}_Q^{(c)})]$.
	
	Taking the expectation of the reduced KL objective (Eq. \ref{eq:kl_gaussian_reduced}) over all classes $c \in \mathcal{Y}$ and substituting the trace identity, we obtain
	\begin{equation}
		\mathbb{E}_{c \in \mathcal{Y}} \Big[ \text{KL}(Q_\phi^{(c)} \parallel P^{(c)}) \Big] = \frac{1}{4\sigma_p^2} \mathcal{L}_{intra} + \mathbb{E}_{c \in \mathcal{Y}} \left[ \frac{1}{2\sigma_p^2}\|\boldsymbol{\mu}_Q^{(c)} - \boldsymbol{\mu}_c\|_2^2 - H(Q_\phi^{(c)}) \right] + const.
	\end{equation}
	
	Under Assumption \ref{assumption_causal}, the differential entropy is lower-bounded ($H(Q_\phi^{(c)}) \ge \xi$), and the squared distance to the prototype $\|\boldsymbol{\mu}_Q^{(c)} - \boldsymbol{\mu}_c\|_2^2$ is inherently non-negative. While $\mathcal{L}_{intra}$ alone does not strictly upper-bound the entire KL divergence due to the mean discrepancy term, minimizing $\mathcal{L}_{intra}$ explicitly suppresses the covariance trace. Therefore, $\mathcal{L}_{intra}$ serves as a principled surrogate that directly regulates the intra-class variance penalty within the KL divergence, effectively encouraging causal invariance.
\end{proof}

\end{document}